\newcommand{\KL}{{\mathrm{KL}}}
\newcommand{\mean}{\mathbb{E}}
\newcommand{\var}{{\rm I\kern-.3em D}}
\newcommand{\RR}{\mathbb{R}}
\newcommand{\cond}{\,|\,}
\newcommand{\Normal}{\mathcal{N}}
\newcommand{\diag}{\mathrm{diag}}
\newcommand{\eps}{\varepsilon}
\newtheorem{theorem}{Theorem}
\begin{document}

%

%

\twocolumn[

\aistatstitle{Doubly Semi-Implicit Variational Inference}


\aistatsauthor{ Dmitry Molchanov$^{1,2,*}$ \And Valery Kharitonov$^{2,*}$ \And Artem Sobolev$^{1}$ \And Dmitry Vetrov$^{1,2}$ }

\aistatsaddress{\hspace{-2em}dmolch111@gmail.com \ \ \ \ \ \ \ \ \ \ kharvd@gmail.com \ \ \ \ \ \ \ \ \ \ asobolev@bayesgroup.ru \ \ \ \ \ \ vetrovd@yandex.ru\\
$^{1}$ Samsung AI Center Moscow \\
$^{2}$ Samsung-HSE Laboratory, National Research University Higher School of Economics\\
$^*$ Equal contribution}

\runningauthor{ Dmitry Molchanov, Valery Kharitonov, Artem Sobolev, Dmitry Vetrov }


]

\begin{abstract}
We extend the existing framework of semi-implicit variational inference (SIVI) and introduce doubly semi-implicit variational inference (DSIVI), a way to perform variational inference and learning when both the approximate posterior and the prior distribution are semi-implicit.
In other words, DSIVI performs inference in models where the prior and the posterior can be expressed as an intractable infinite mixture of some analytic density with a highly flexible implicit mixing distribution.
We provide a sandwich bound on the evidence lower bound (ELBO) objective that can be made arbitrarily tight.
Unlike discriminator-based and kernel-based approaches to implicit variational inference, DSIVI optimizes a proper lower bound on ELBO that is asymptotically exact.
We evaluate DSIVI on a set of problems that benefit from implicit priors.
In particular, we show that DSIVI gives rise to a simple modification of VampPrior, the current state-of-the-art prior for variational autoencoders, which improves its performance.

\end{abstract}

\section{INTRODUCTION}
Bayesian inference is an important tool in machine learning.
It provides a principled way to reason about uncertainty in parameters or hidden representations.
In recent years, there has been great progress in scalable Bayesian methods, which made it possible to perform approximate inference for large-scale datasets and deep learning models. 

One of such methods is variational inference (VI)~\cite{blei2017}, which is an optimization-based approach. Given a probabilistic model $p(x, z)=p(x\cond z)p(z)$, where $x$ are observed data and $z$ are latent variables, VI seeks to maximize the evidence lower bound (ELBO).
\begin{equation}
    \label{eq:elbo_intro}
    \mathcal{L}(\phi)=\mean_{q_\phi(z)}[\log p(x\cond z)] - \KL(q_{\phi}(z) \,\|\, p(z)),
\end{equation}
where $q_\phi(z)$ approximates the intractable true posterior distribution $p(z\cond x)$.
The parametric approximation family for $q_\phi$ is chosen in such a way, that we can efficiently estimate $\mathcal{L}(\phi)$ and its gradients w.r.t.~$\phi$.

Such approximations to the true posterior are often too simplistic.
There exists a variety of ways to extend the variational family to mitigate this.
They can be divided roughly into two main groups: those that require the probability density function of the approximate posterior to be analytically tractable (which we will call \emph{explicit} models) \cite{saul1996,jaakkola1998,hoffman2015,giordano2015,tran2015,tran2016,han2016,ranganath2016,maaloe2016} and those that do not (\emph{implicit} models) \cite{huszar2017,mohamed2016,tran2017,li2017,mescheder2017,shi2017,yin2018}.
For latter, we only assume that it is possible to sample from such distributions, whereas the density may be inaccessible.

Not only approximate posteriors but also priors in such models are often chosen to be very simple to make computations tractable.
This can lead to overregularization and poor hidden representations in generative models such as variational autoencoders (VAE, \cite{kingma2013}) \cite{hoffman2016,tomczak2017,alemi2018}.
In Bayesian neural networks, a standard normal prior is the default choice, but together with the mean field posterior, it can lead to overpruning and consequently underfitting \cite{trippe2018}.
To overcome such problem in practice, one usually scales the KL divergence term in the expression for ELBO or truncates the variances of the approximate posterior \cite{neklyudov2017,louizos2017multiplicative,louizos2017compression}.

Another way to overcome this problem is to consider more complicated prior distributions, e.g. implicit priors.
For example, hierarchical priors usually impose an implicit marginal prior when hyperparameters are integrated out.
To perform inference in such models, one often resorts to joint inference over both parameters and hyperparameters, even though we are only interested in the marginal posterior over parameters of the model.
Another example of implicit prior distributions is the optimal prior for variational autoencoders.
It can be shown that the aggregated posterior distribution is the optimal prior for VAE \cite{hoffman2016}, and it can be regarded as an implicit distribution.
The VampPrior model \cite{tomczak2017} approximates this implicit prior using an explicit discrete mixture of Gaussian posteriors.
However, this model can be further improved if we consider an arbitrary trainable semi-implicit prior.

In this paper, we extend the recently proposed framework of semi-implicit variational inference (SIVI)~\cite{yin2018} and consider priors and posteriors that are defined as semi-implicit distributions.
By~``semi-implicit'' we mean distributions that do not have a tractable PDF (i.e. implicit), but that can be represented as a mixture of some analytically tractable density with a flexible mixing distribution, either explicit or implicit.

Our contributions can be summarized as follows.
Firstly, we prove that the SIVI objective is actually a lower bound on the true ELBO, which allows us to sandwich the ELBO between an upper bound and a lower bound which are both asymptotically exact.
Secondly, we propose \emph{doubly semi-implicit variational inference} (DSIVI), a general-purpose framework for variational inference and variational learning in the case when both the posterior and the prior are semi-implicit.
We construct a SIVI-inspired asymptotically exact lower bound on the ELBO for this case, and use the variational representation of the KL divergence to obtain the upper bound.
Finally, we consider a wide range of applications where semi-implicit distributions naturally arise, and show how the use of DSIVI in these settings is beneficial.

\section{PRELIMINARIES}
Consider a probabilistic model defined by its joint distribution $p(x, z)=p(x\cond z)p(z)$, where variables $x$ are observed, and $z$ are the latent variables.
Variational inference is a family of methods that approximate the intractable posterior distribution $p(z\cond x)$ with a tractable parametric distribution $q_\phi(z)$.
To do so, VI methods maximize the evidence lower bound (ELBO):
\begin{equation}
    \label{eq:elbo}
    \log p(x)\geq \mathcal{L}(\phi)=\mean_{q_\phi(z)}\log\frac{p(x\cond z)p(z)}{q_\phi(z)}\to\max_{\phi}.
\end{equation}
The maximum of the  evidence lower bound corresponds to the minimum of the KL-divergence $\KL(q_\phi(z)\,\|\,p(z\cond x))$ between the variational distribution $q_\phi(z)$ and the exact posterior $p(z\cond x)$.
In the more general \emph{variational learning} setting, the prior distribution may also be a parametric distribution $p_\theta(z)$ \cite{huszar2017}.
In this case, one would optimize the ELBO w.r.t. both the variational parameters $\phi$ and the prior parameters $\theta$, thus performing approximate maximization of the marginal likelihood $p(x\cond\theta)$.

The common way to estimate the gradient of this objective is to use the reparameterization trick \cite{kingma2013}.
The reparameterization trick recasts the sampling from the parametric distribution $q_\phi(z)$ as the sampling of non-parametric noise $\eps\sim p(\eps)$, followed by a deterministic parametric transformation $z=f(\eps,\phi)$.
Still, such gradient estimator requires log-densities of both the prior distribution $p(z)$ and the approximate posterior $q_\phi(z)$ in closed form.
Several methods have been proposed to overcome this limitation \cite{ranganath2016,mescheder2017,shi2017}.
However, such methods usually provide a biased estimate of the evidence lower bound with no practical way of estimating the introduced bias.

The reparameterizable distributions with no closed-form densities are usually referred to as \emph{implicit distributions}.
In this paper we consider the so-called semi-implicit distributions that are defined as an implicit mixture of explicit conditional distributions:
\begin{equation}
\label{eq:si}
    q_\phi(z)=\int q_\phi(z\cond\psi)q_\phi(\psi)\,d\psi.
\end{equation}
Here, the conditional distribution $q_\phi(z\cond\psi)$ is explicit.
However, when its condition $\psi$ follows an implicit distribution $q_\phi(\psi)$, the resulting marginal distribution $q_\phi(z)$ is implicit.
We will refer to $q_\phi(\psi)$ as the mixing distribution, and to $\psi$ as the mixing variables.

Note that we may easily sample from semi-implicit distributions: in order to sample $z$ from $q_\phi(z)$, we need to first sample the mixing variable $\psi\sim q_\phi(\psi)$, and then sample $z$ from the conditional $q_\phi(z\cond\psi)$.
Further in the text, we will assume this sampling scheme when using expectations $\mean_{z\sim q_\phi(z)}$ over semi-implicit distributions.
Also note that an arbitrary implicit distribution can be represented in a semi-implicit form: $q_\phi(z)=\int \delta(z-z')q_\phi(z')\,dz'$.

\section{RELATED WORK}

There are several approaches to inference and learning in models with implicit distributions.

One approach is commonly referred to as hierarchical variational inference or auxiliary variable models.
It allows for inference with implicit approximate posteriors $q_\phi(z)$ that can be represented as a marginal distribution of an explicit joint distribution $q_\phi(z)=\int q_\phi(z,\psi)\,d\psi$.
The ELBO is then bounded from below using a reverse variational model $r_\omega(\psi\cond z)\approx q_\phi(\psi\cond z)$
\cite{ranganath2016,salimans2015,louizos2017multiplicative}.
This method does not allow for implicit prior distributions, requires access to the explicit joint density $q_\phi(z,\psi)$ and has no way to estimate the increased inference gap, introduced by the imperfect reverse model.
However, recently proposed deep weight prior \cite{atanov2018the} provides a new lower bound, suitable for learning hierarchical priors in a similar fashion.

Another family of models uses an optimal discriminator to estimate the ratio of implicit densities $r(z)=\frac{q_\phi(z)}{p_\theta(z)}$ \cite{mescheder2017,mohamed2016,huszar2017}.
This is the most general approach to inference and learning with implicit distributions, but it also optimizes a biased surrogate ELBO, and the induced bias cannot be estimated.
Also, different authors report that the performance of this approach is poor if the dimensionality of the implicit densities is high \cite{sugiyama2012,tran2017}.
This is the only approach that allows to perform variational learning (learning the parameters $\theta$ of the prior distribution $p_\theta(z)$).
However, it is non-trivial and requires differentiation through a series of SGD updates.
This approach has not been validated in practice yet and has only been proposed as a theoretical concept \cite{huszar2017}.
On the contrary, DSIVI provides a lower bound that can be directly optimized w.r.t. both the variational parameters $\phi$ and the prior parameters $\theta$, naturally enabling variational learning.

Kernel implicit variational inference (KIVI) \cite{shi2017} is another approach that uses kernelized ridge regression to approximate the density ratio.
It is reported to be more stable than the discriminator-based approaches, as the proposed density ratio estimator can be computed in closed form.
Still, this procedure introduces a bias that is not addressed.
Also, KIVI relies on adaptive contrast that does not allow for implicit prior distributions \cite{mescheder2017,shi2017}.

There are also alternative formulations of variational inference that are based on different divergences.
One example is operator variational inference \cite{ranganath2016operator} that uses the Langevin-Stein operator to design a new variational objective.
Although it allows for arbitrary implicit posterior approximations, the prior distribution has to be explicit.

\section{DOUBLY SEMI-IMPLICIT VARIATIONAL INFERENCE}
In this section, we will describe semi-implicit variational inference, study its properties, and then extend it for the case of semi-implicit prior distributions.

\subsection{Semi-Implicit Variational Inference}
Semi-implicit variational inference \cite{yin2018} considers models with an explicit joint distribution $p(x, z)$ and a semi-implicit approximate posterior $q_\phi(z)$, as defined in Eq.~\eqref{eq:si}.
The basic idea of semi-implicit variational inference is to approximate the semi-implicit approximate posterior with a finite mixture:
\begin{equation}
\begin{aligned}
    q_\phi(z)&=\int q_\phi(z\,|\,\psi)q_\phi(\psi)\,d\psi\approx\\
    &\approx\frac1K\sum_{k=1}^Kq_\phi(z\,|\,\psi^k),\ \ \  \psi^k\sim q_\phi(\psi).
\end{aligned}
\end{equation}
SIVI provides an upper bound $\overline{\mathcal{L}}^q_K\geq\overline{\mathcal{L}}^q_{K+1}\geq\mathcal{L}$, and a surrogate objective $\underline{\mathcal{L}}^q_K$ that both converge to ELBO as $K$ goes to infinity ($\underline{\mathcal{L}}^q_\infty=\overline{\mathcal{L}}^q_\infty=\mathcal{L}$):
\begin{align}
    &\overline{\mathcal{L}}^q_K=\mean_{q_\phi(z)}\log{p(x\cond z)p(z)} -\\
    &-\mean_{\psi^{0..K}\sim q_\phi(\psi)}\mean_{z\sim q_\phi(z\cond\psi^0)}\log{\frac1K\sum_{k=1}^Kq_\phi(z\cond\psi^k)},\nonumber\\
\label{eq:lower_elbo}
    &\underline{\mathcal{L}}^q_K=\mean_{q_\phi(z)}\log{p(x\cond z)p(z)} -\\
    &-\mean_{\psi^{0..K}\sim q_\phi(\psi)}\mean_{z\sim q_\phi(z\cond\psi^0)}\log{\frac1{K+1}\sum_{k=0}^Kq_\phi(z\cond\psi^k)}.\nonumber
\end{align}
The surrogate objective $\underline{\mathcal{L}}^q_K$ is then used for optimization.

\subsection{SIVI Lower Bound}
Although it was shown that $\underline{\mathcal{L}}^q_0$ is a lower bound for ELBO, it has not been clear whether this holds for arbitrary $K$, and whether maximizing $\underline{\mathcal{L}}^q_K$ leads to a correct procedure.
Here, we show that $\underline{\mathcal{L}}^q_K$ is indeed a lower bound on ELBO $\mathcal{L}$.

\begin{theorem}
Consider $\mathcal{L}$ and $\underline{\mathcal{L}}^q_K$ defined as in Eq.~\eqref{eq:elbo} and \eqref{eq:lower_elbo}.
Then $\underline{\mathcal{L}}^q_K$ converges to $\mathcal{L}$ from below as $K \to \infty$, satisfying $\underline{\mathcal{L}}^q_K\leq\underline{\mathcal{L}}^q_{K+1}\leq\mathcal{L}$, and
\begin{align}
    &\underline{\mathcal{L}}^q_K=\mean_{\psi^{0..K}\sim q_\phi(\psi)}\mean_{q_\phi^K(z\,|\,\psi^{0..K})}\log{\frac{p(x\cond z)p(z)}{q_\phi^K(z\,|\,\psi^{0..K})}},\label{eq:sivil}\\
    &\text{where }q_\phi^K(z\,|\,\psi^{0..K})=\frac1{K+1}\sum_{k=0}^Kq_\phi(z\cond\psi^k).
\end{align}
\end{theorem}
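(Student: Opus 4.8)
The plan is to first recast $\underline{\mathcal{L}}^q_K$ into the symmetric form \eqref{eq:sivil}, and then extract both the upper bound and the monotonicity from a single tool, namely the convexity of the KL divergence in its first argument.

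First I would establish \eqref{eq:sivil}. Since sampling $\psi^0\sim q_\phi(\psi)$ followed by $z\sim q_\phi(z\cond\psi^0)$ marginalizes to $z\sim q_\phi(z)$, the first term $\mean_{q_\phi(z)}\log p(x\cond z)p(z)$ can be folded into the joint expectation over $\psi^{0..K}\sim q_\phi(\psi)$ and $z\sim q_\phi(z\cond\psi^0)$. The resulting integrand $g(z,\psi^{0..K})=\log\frac{p(x\cond z)p(z)}{q_\phi^K(z\cond\psi^{0..K})}$ is symmetric under permutations of $\psi^0,\dots,\psi^K$, and the $\psi^k$ are i.i.d. By exchangeability, $\mean_{\psi^{0..K}}\mean_{z\sim q_\phi(z\cond\psi^j)}g$ takes the same value for every index $j$; averaging these $K+1$ identical quantities replaces the sampling density $q_\phi(z\cond\psi^0)$ by the mixture $q_\phi^K(z\cond\psi^{0..K})$, which is exactly \eqref{eq:sivil}.

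With \eqref{eq:sivil} in hand, I would write the inner expectation, for each fixed $\psi^{0..K}$, as $\log p(x)-\KL(q_\phi^K(\cdot\cond\psi^{0..K})\,\|\,p(z\cond x))$. For the upper bound, the key observation is that $\mean_{\psi^{0..K}}q_\phi^K(z\cond\psi^{0..K})=q_\phi(z)$, because each $\mean_{\psi^k}q_\phi(z\cond\psi^k)=q_\phi(z)$ by \eqref{eq:si}. Applying Jensen's inequality to the convex map $q\mapsto\KL(q\,\|\,p(z\cond x))$ gives $\KL(q_\phi\,\|\,p(z\cond x))\le\mean_{\psi^{0..K}}\KL(q_\phi^K\,\|\,p(z\cond x))$, and subtracting both sides from $\log p(x)$ yields $\underline{\mathcal{L}}^q_K\le\mathcal{L}$.

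For monotonicity I would use the leave-one-out identity $q_\phi^{K+1}(z\cond\psi^{0..K+1})=\frac1{K+2}\sum_{j=0}^{K+1}q_\phi^{K}(z\cond\psi^{0..K+1\setminus j})$, where the $j$-th summand averages over all components except $\psi^j$ (a direct count shows each $q_\phi(z\cond\psi^k)$ appears $K+1$ times). Convexity of KL then bounds $\KL(q_\phi^{K+1}\,\|\,p(z\cond x))$ by the average of the $K+2$ leave-one-out KL terms, and by exchangeability each such term has the common expectation $\mean_{\psi^{0..K}}\KL(q_\phi^K\,\|\,p(z\cond x))$; hence the expected KL is nonincreasing in $K$, which is precisely $\underline{\mathcal{L}}^q_K\le\underline{\mathcal{L}}^q_{K+1}$. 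Finally, as $K\to\infty$ the empirical mixture $q_\phi^K$ converges to $q_\phi$ by the law of large numbers, so $\underline{\mathcal{L}}^q_K\to\mathcal{L}$. The main obstacle is the monotonicity step: the leave-one-out decomposition is the nonobvious ingredient, and some care is needed to justify interchanging expectation with the $\log$ and with $\KL$ (and to control the limit) under appropriate integrability assumptions.
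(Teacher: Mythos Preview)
Your proof is correct, and your derivation of the symmetric form \eqref{eq:sivil} via exchangeability matches the paper's first step exactly. From there, however, the two arguments diverge.

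The paper does not invoke convexity of $\KL$ at all. Instead, after establishing \eqref{eq:sivil}, it rewrites $\mathcal{L}$ in the same nested expectation (using $\mean_{\psi^{0..K}}q_\phi^K=q_\phi$) and computes the difference \emph{exactly}:
\[
\mathcal{L}-\underline{\mathcal{L}}^q_K=\mean_{\psi^{0..K}}\KL\bigl(q_\phi^K(\cdot\cond\psi^{0..K})\,\|\,q_\phi\bigr)\geq 0.
\]
For monotonicity, it similarly observes (again by exchangeability) that $\underline{\mathcal{L}}^q_{K+1}$ can be written with $z$ drawn from $q_\phi^K(\cdot\cond\psi^{0..K})$ rather than from $q_\phi^{K+1}$, so that
\[
\underline{\mathcal{L}}^q_{K+1}-\underline{\mathcal{L}}^q_K=\mean_{\psi^{0..K+1}}\KL\bigl(q_\phi^K(\cdot\cond\psi^{0..K})\,\|\,q_\phi^{K+1}(\cdot\cond\psi^{0..K+1})\bigr)\geq 0.
\]
Thus both gaps are identified as expected KL divergences and are nonnegative by the nonnegativity of KL alone. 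Your route---Jensen via convexity of $q\mapsto\KL(q\,\|\,p(z\cond x))$ together with the leave-one-out decomposition of $q_\phi^{K+1}$---is a genuinely different argument. It is equally valid and has the appeal of using one unifying tool; the paper's version is slightly more elementary (only nonnegativity of KL is needed) and has the bonus of producing explicit closed-form expressions for both gaps, which your convexity argument yields only as inequalities. Neither the paper nor your sketch addresses the integrability and limit-interchange issues you flag, so that caveat applies to both.
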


The proof can be found in Appendix~\ref{app:lower-posterior}.

It can be seen from Eq.~\eqref{eq:sivil} that the surrogate objective $\underline{\mathcal{L}}^q_K$ proposed by \cite{yin2018} is actually the ELBO for a finite mixture approximation $q_\phi^K(z\cond\psi^0,\dots,\psi^K)$, that is averaged over all such mixtures (averaged over samples of $\psi^0,\dots,\psi^K\sim q_\phi(\psi)$).

\subsection{Semi-Implicit Priors}
\label{sec:sivi-prior}
Inspired by the derivation of the SIVI upper bound, we can derive the lower bound $\underline{\mathcal{L}}^{p}_K$ for the case of semi-implicit prior distributions.
Right now, for simplicity, assume an explicit approximate posterior $q_\phi(z)$, and a semi-implicit prior $p_\theta(z)=\int p_\theta(z|\zeta)p_\theta(\zeta)\,d\zeta$
\begin{gather}
\begin{multlined}
    \underline{\mathcal{L}}^p_K=\mean_{q_\phi(z)}\log{p(x\cond z)}-\\
    -\mean_{\zeta^{1..K}\sim p_\theta(\zeta)}\mean_{q_\phi(z)}\log\frac{q_\phi(z)}{\frac1K\sum_{k=1}^Kp_\theta(z\cond\zeta^k)},
\end{multlined}\\[1em]
\underline{\mathcal{L}}^{p}_K\leq \underline{\mathcal{L}}^{p}_{K+1}\leq\underline{\mathcal{L}}^{p}_\infty=\mathcal{L}.
\end{gather}
This bound has the same properties: it is non-decreasing in $K$ and is asymptotically exact.
To see why $\underline{\mathcal{L}}^{p}_K\leq\mathcal{L}$, one just needs to apply the Jensen's inequality for the logarithm:
\begin{equation}
    \begin{multlined}
    \mean_{\zeta^{1..K}\sim p_\theta(\zeta)}\mean_{q_\phi(z)}\log{\frac1K\sum_{k=1}^Kp_\theta(z\cond\zeta^k)}\leq\\[-0.5em]
    \begin{aligned}
        &\leq\mean_{q_\phi(z)}\log{\mean_{\zeta^{1..K}\sim p_\theta(\zeta)}\frac1K\sum_{k=1}^Kp_\theta(z\cond\zeta^k)}=\\
        &=\mean_{q_\phi(z)}\log{p_\theta(z)}.
    \end{aligned}        
    \end{multlined}
\end{equation}
To show that this bound is non-decreasing in $K$, one can refer to the proof of proposition~3 in the SIVI paper \cite[Appendix~A]{yin2018}.

Note that it is no longer possible to use the same trick to obtain the upper bound.
Still, we can obtain an upper bound using the variational representation of the KL-divergence \cite{nguyen2010}:
\begin{gather}
\begin{multlined}
    \KL(q_\phi(z)\,\|\,p_\theta(z))=\\
    =1+\sup_{g:\mathrm{dom}\,z\to\RR}\left\{\mean_{q_\phi(z)}g(z)-\mean_{p_\theta(z)}e^{g(z)}\right\}\geq\\
    \geq 1+\sup_{\eta}\left\{\mean_{q_\phi(z)}g(z, \eta)-\mean_{p_\theta(z)}e^{g(z, \eta)}\right\},
\end{multlined}\\[1em]
\begin{multlined}
    \overline{\mathcal{L}}^p_\eta=\mean_{q_\phi(z)}\log{p(x\cond z)}-\\
    -1-\mean_{q_\phi(z)}g(z, \eta)+\mean_{p_\theta(z)} e^{g(z, \eta)}.
\end{multlined}
\end{gather}
Here we substitute the maximization over all functions with a single parametric function.
In order to obtain a tighter bound, we can minimize this bound w.r.t. the parameters $\eta$ of function $g(z, \eta)$.

Note that in order to find the optimal value for $\eta$, one does not need to estimate the entropy term or the likelihood term of the objective:
\begin{equation}
    \eta^*=\arg\min_{\eta}\left[-\mean_{q_\phi(z)}g(z, \eta)+\mean_{p_\theta(z)}e^{g(z, \eta)}\right].
\end{equation}
This allows us to obtain a lower bound on the KL-divergence between two arbitrary (semi-)implicit distributions, and, consequently, results in an upper bound on the ELBO.

\subsection{Final Objective}
We can combine the bounds for the semi-implicit posterior and the semi-implicit prior to obtain the final lower bound
\begin{equation*}
    \begin{split}
    &\underline{\underline{\mathcal{L}}}_{K_1,K_2}^{q,p}=\mean_{q_\phi(z)}\log{p(x\cond z)}-\\
    &-\mean_{\psi^{0..K_1}\sim q_\phi(\psi)}\mean_{q_\phi(z\cond\psi^0)}\log\frac1{K_1+1}\sum_{k=0}^{K_1}q_\phi(z\cond\psi^k)+\\
    \end{split}
\end{equation*}
\begin{equation}
\label{eq:dsivilower}
    \begin{split}
    &+\mean_{\zeta^{1..K_2}\sim p_\theta(\zeta)}\mean_{q_\phi(z)}\log\frac1{K_2}\sum_{k=1}^{K_2}p_\theta(z\cond\zeta^k),
    \end{split}
\end{equation}
and the upper bound
\begin{equation}
\label{eq:dsiviupper}
\begin{split}
    \overline{{\mathcal{L}}}_{\eta}^{q,p}=&\mean_{q_\phi(z)}\log{p(x\cond z)}-\\
    &-1-\mean_{q_\phi(z)}g(z, \eta)+\mean_{p_\theta(z)}e^{g(z, \eta)}.
\end{split}
\end{equation}
The lower bound $\underline{\underline{\mathcal{L}}}_{K_1,K_2}^{q,p}$ is non-decreasing in both $K_1$ and $K_2$, and is asymptotically exact:
\begin{align}
    \underline{\underline{\mathcal{L}}}_{K_1,K_2}^{q,p}\leq \underline{\underline{\mathcal{L}}}_{K_1+1,K_2}^{q,p},\ \ &\ \ \ \underline{\underline{\mathcal{L}}}_{K_1,K_2}^{q,p}\leq \underline{\underline{\mathcal{L}}}_{K_1,K_2+1}^{q,p},\\
    \lim_{K_1,K_2\to\infty}&\underline{\underline{\mathcal{L}}}_{K_1,K_2}^{q,p}=\mathcal{L}.
\end{align}
We use the lower bound for optimization, whereas the upper bound may be used to estimate the gap between the lower bound and the true ELBO.
The final algorithm for DSIVI is presented in Algorithm~\ref{alg:dsivi}.
Unless stated otherwise, we use 1 MC sample to estimate the gradients of the lower bound (see Algorithm~\ref{alg:dsivi} for more details).
In the case where the prior distribution is explicit, one may resort to the upper bound $\overline{\mathcal{L}}_K^q$, proposed in SIVI~\cite{yin2018}.
\begin{algorithm}[t]
  \caption{\label{alg:dsivi} Doubly semi-implicit VI (and learning)}  
  \begin{algorithmic}  
    \Require{SI posterior $q_\phi(z)=\int q_\phi(z\cond\psi)q_\phi(\psi)\,d\psi$}
    \Require{SI prior $p_\theta(z)=\int p_\theta(z\cond\zeta)p_\theta(\zeta)\,d\zeta$}
    \Require{explicit log-likelihood $\log p(x\cond z)$}
    \State Variational inference (find $\phi$) and learning (find $\theta$)
    \For{$t \gets 1$ to $T$}
        \State $\psi^0,\dots,\psi^{K_1}\sim q_\phi(\psi)$ \Comment{Reparameterization}
        \State $\zeta^1,\dots,\zeta^{K_2}\sim p_\theta(\zeta)$ \Comment{Reparameterization}
        \State $z\sim q_\phi(z\cond\psi^0)$ \Comment{Reparameterization}
        \State Estimate $L_{LH}\simeq\log p(x\cond z)$
        \State $L_{E}\gets-\log\frac{1}{K_1+1}\sum_{k=0}^{K_1}q_\phi(z\cond\psi^k)$
        \State $L_{CE}\gets-\log\frac{1}{K_2}\sum_{k=1}^{K_2}p_\theta(z\cond\zeta^k)$
        \State $\hat{\underline{\underline{\mathcal{L}}}}_{K_1,K_2}^{q,p}\gets L_{LH}+L_{E}-L_{CE}$
        \State Use $\nabla_\phi\hat{\underline{\underline{\mathcal{L}}}}_{K_1,K_2}^{q,p}$ to update $\phi$
        \If{Variational learning}
            \State Use $\nabla_\theta\hat{\underline{\underline{\mathcal{L}}}}_{K_1,K_2}^{q,p}$ to update $\theta$
        \EndIf
    \EndFor
    \State Upper bound
    \For{$t \gets 1$ to $T$}
        \State $z\sim q_\phi(z)$ \Comment{Reparameterization}
        \State $z'\sim p_\theta(z)$ \Comment{Reparameterization}
        \State $L\gets -g(z, \eta)+e^{g(z',\eta)}$
        \State Use $-\nabla_\eta L$ to update $\eta$
    \EndFor
    \State Estimate ${\underline{\underline{\mathcal{L}}}}_{K_1,K_2}^{q,p}$ and ${\overline{\mathcal{L}}}_{\eta}^{q,p}$ using Eq.~\eqref{eq:dsivilower} and \eqref{eq:dsiviupper}\\
    \Return $\phi, \theta, \eta, {\underline{\underline{\mathcal{L}}}}_{K_1,K_2}^{q,p}, \overline{{\mathcal{L}}}_{\eta}^{q,p}$
  \end{algorithmic}  
\end{algorithm}

\section{APPLICATIONS}
\label{sec:app}
In this section we describe several settings that can benefit from semi-implicit prior distributions.
\subsection{VAE with Semi-Implicit Priors}
\label{sec:app-vae}

The default choice of the prior distribution $p(z)$ for the VAE model is the standard Gaussian distribution.
However, such choice is known to over-regularize the model \cite{tomczak2017,higgins2016}.

It can be shown that the so-called aggregated posterior distribution is the optimal prior distribution for a VAE in terms of the value of ELBO \cite{hoffman2016,tomczak2017}:
\begin{equation}
    p^*(z)=\frac{1}{N}\sum_{n=1}^N q_\phi(z\cond x_n),
\end{equation}
where the summation is over all training samples $x_n$, $n=1, \dotsc, N$. However, this extreme case leads to overfitting \cite{hoffman2016,tomczak2017}, and is highly computationally inefficient.
A possible middle ground is to consider the \emph{variational mixture of posteriors} prior distribution (the VampPrior) \cite{tomczak2017}:
\begin{equation}
    p^{Vamp}(z)=\frac{1}{K}\sum_{k=1}^K q_\phi(z\cond u_k).
\end{equation}
The VampPrior is defined as a mixture of $K$ variational posteriors $q_\phi(z\cond u_k)$ for a set of inducing points $\{u_k\}_{k=1}^K$.
These inducing points may be learnable (an ordinary VampPrior) or fixed at a random subset of the training data (VampPrior-data).
The VampPrior battles over-regularization by considering a flexible empirical prior distribution, being a mixture of fully-factorized Gaussians, and by coupling the parameters of the prior distribution and the variational posteriors.

There are two ways to improve this technique by using DSIVI.
We can regard the aggregated posterior $p^*(z)$ as a semi-implicit distribution:
\begin{equation}
\label{eq:vae-agg}
    p^*(z)=\frac{1}{N}\sum_{n=1}^N q_\phi(z|x_n)=\int q_\phi(z|x)p_{data}(x)dx.
\end{equation}
Next, we can use it as a semi-implicit prior and exploit the lower bound, presented in Section~\ref{sec:sivi-prior}:
\begin{equation}
\begin{multlined}
    \underline{\mathcal{L}}_{K}^p=\frac1N\sum_{n=1}^N\mean_{q_\phi(z\cond x_n)}\left[\log\frac{p(x_n\cond z)}{q_\phi(z\cond x_n)}+\right.\\
    \left.+\mean_{u_{1..K}\sim p_{data}(x)}\log\frac{1}{K}\sum_{k=1}^K q_\phi(z\cond u_k)\right].
\end{multlined}
\end{equation}
Note that the only difference from the training objective of VampPrior-data is that the inducing points $u_k$ are not \emph{fixed}, but are \emph{resampled} at each estimation of the lower bound.
As we show in the experiments, such reformulation of VampPrior-data drastically improves its test log-likelihood.

We can also consider an arbitrary semi-implicit prior distribution:
\begin{equation}
\label{eq:vae-sivi}
    p_\theta^{SI}(z)=\int p_\theta(z\cond\zeta)p_\theta(\zeta)\,d\zeta.
\end{equation}
For example, we consider a fully-factorized Gaussian conditional prior $p_\theta(z\cond\zeta)=\Normal(z\cond\zeta, \diag(\sigma^2))$ with mean $\zeta$ and trainable variances $\sigma_j^2$.
The implicit generator $p_\theta(\zeta)$ can be parameterized by an arbitrary neural network with weights $\theta$ that transforms a standard Gaussian noise $\eps$ to mixing parameters $\zeta$.
As we show in the experiments, such semi-implicit posterior outperforms VampPrior even though it does not couple the parameters of the prior and the variational posteriors.

We can also apply the importance-weighted lower bound \cite{burda2015} similarly to the importance weighted SIVAE~\cite{yin2018}, and obtain IW-DSIVAE, a lower bound on the IWAE objective for a variational autoencoder with a semi-implicit prior and a semi-implicit posterior.
The exact expression for this lower bound is presented in Appendix~\ref{app:iw-dsivae}.

\subsection{Variational Inference with Hierarchical Priors}
\label{sec:app-hp}
A lot of probabilistic models use hierarchical prior distributions: instead of a non-parametric prior $p(w)$ they use a parametric conditional prior $p(w\cond\alpha)$ with hyperparameters $\alpha$, and a hyperprior over these parameters $p(\alpha)$.
A discriminative model with such hierarchical prior may be defined as follows~\cite{neal1995,tipping2001,titsias2014,hernandez2015,shi2017,louizos2017compression}:
\begin{equation}
    p(t,w,\alpha\cond x)=p(t\cond x, w)p(w\cond\alpha)p(\alpha).
\end{equation}
A common way to perform inference in such models is to approximate the joint posterior $q_\phi(w,\alpha)\approx p(w,\alpha\cond X_{tr}, T_{tr})$ given the training data $(X_{tr}, T_{tr})$~\cite{hernandez2015,shi2017,louizos2017compression}.
Then the marginal approximate posterior $q_\phi(w)=\int q_\phi(w,\alpha)\,d\alpha$ is used to approximate the predictive distribution on unseen data $p(t\cond x, X_{tr}, T_{tr})$:
\begin{equation}
    \begin{multlined}
    p(t\cond x, X_{tr}, T_{tr})=\int p(t\cond x, w)p(w\cond X_{tr}, T_{tr})\,dw=\\
    \begin{aligned}
    &=\int p(t\cond x, w)\int p(w,\alpha\cond X_{tr}, T_{tr})\,d\alpha\,dw\approx\\
    &\approx \int p(t\cond x, w)\int q_\phi(w,\alpha)d\alpha\,dw=\\
    &=\int p(t\cond x, w)q_\phi(w)\,dw.
    \end{aligned}
    \end{multlined}
\end{equation}
The inference is performed by maximization of the following variational lower bound:
\begin{equation}
    \mathcal{L}^{joint}(\phi)=\mean_{q_\phi(w,\alpha)}\log\frac{p(t\cond x, w)p(w\cond\alpha)p(\alpha)}{q_\phi(w,\alpha)}.
\end{equation}
We actually are not interested in the joint posterior $q_\phi(w, \alpha)$, and we only need it to obtain the marginal posterior $q_\phi(w)$.
In this case we can reformulate the problem as variational inference with a semi-implicit prior $p(w)=\int p(w\cond\alpha)p(\alpha)\,d\alpha$ and a semi-implicit posterior $q_\phi(w)=\int q_\phi(w\cond\alpha)q_\phi(\alpha)\,d\alpha$:
\begin{equation}
    \mathcal{L}^{marginal}(\phi)=\mean_{q_\phi(w)}\log\frac{p(t\cond x, w)p(w)}{q_\phi(w)}.
\end{equation}
Then it can be shown that optimization of the second objective results in a better fit of the marginal posterior:
\begin{theorem}
Let $\phi_{j}$ and $\phi_{m}$ maximize $\mathcal{L}^{joint}$ and $\mathcal{L}^{marginal}$ correspondingly. Then
\begin{multline}
    \KL(q_{\phi_m}(w)\,\|\,p(w\cond X_{tr}, T_{tr}))\leq\\
    \KL(\,q_{\phi_{j}}(w)\,\,\|\,p(w\cond X_{tr}, T_{tr})).
\end{multline}
\end{theorem}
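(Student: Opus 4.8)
The plan is to reduce the statement to the elementary variational identity that, for the marginal objective, maximizing the lower bound is the same as minimizing the KL divergence to the true marginal posterior. Once this is in place, $\phi_m$ is by definition the global minimizer of precisely the quantity on the left-hand side, while $\phi_j$ is merely one admissible value of $\phi$, so the inequality follows with no further work. Crucially, I would not try to relate $\mathcal{L}^{joint}$ and $\mathcal{L}^{marginal}$ to each other at all; the argument needs only the optimality of $\phi_m$ for the marginal problem.

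First I would apply Bayes' rule to the true marginal posterior over the training data, writing (with $x,t$ read as $X_{tr}, T_{tr}$)
\[
    p(t\cond x, w)\,p(w) = p(w\cond X_{tr}, T_{tr})\, p(T_{tr}\cond X_{tr}).
\]
Substituting into the definition of $\mathcal{L}^{marginal}$ and pulling the constant $\log p(T_{tr}\cond X_{tr})$ out of the expectation over $q_\phi(w)$ yields the standard decomposition
\[
    \mathcal{L}^{marginal}(\phi) = \log p(T_{tr}\cond X_{tr}) - \KL\big(q_\phi(w)\,\|\,p(w\cond X_{tr}, T_{tr})\big).
\]
At this point I would record the one modelling fact that makes the comparison meaningful: $q_\phi(w)=\int q_\phi(w\cond\alpha)q_\phi(\alpha)\,d\alpha$ is the marginal of the \emph{same} joint family $q_\phi(w,\alpha)$ that appears in $\mathcal{L}^{joint}$, so both objectives are optimized over a common parameter $\phi$, and $q_\phi(w)$ is a well-defined function of $\phi$.

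To finish, since $\log p(T_{tr}\cond X_{tr})$ is independent of $\phi$, maximizing $\mathcal{L}^{marginal}$ over $\phi$ is exactly equivalent to minimizing $\KL(q_\phi(w)\,\|\,p(w\cond X_{tr}, T_{tr}))$. By hypothesis $\phi_m$ attains this maximum, hence it is a global minimizer of the marginal KL; evaluating that KL at the particular feasible point $\phi_j$ can only give a value that is at least as large, which is the claimed inequality.

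There is no substantial obstacle here: the two points requiring care are simply verifying that the evidence term is $\phi$-independent and that the two objectives share the parameter $\phi$. The only genuine \emph{pitfall} is the temptation to compare the objectives directly—one can show via the chain rule for KL that $\mathcal{L}^{joint}(\phi)=\mathcal{L}^{marginal}(\phi)-\mean_{q_\phi(w)}\KL(q_\phi(\alpha\cond w)\,\|\,p(\alpha\cond w, X_{tr},T_{tr}))\leq \mathcal{L}^{marginal}(\phi)$, but this relation is irrelevant to the theorem and would only obscure the clean one-line argument above.
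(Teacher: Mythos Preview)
Your proposal is correct and matches the paper's proof essentially line for line: the paper also observes that $\mathcal{L}^{marginal}(\phi)+\KL(q_\phi(w)\,\|\,p(w\cond X_{tr},T_{tr}))$ is a constant in $\phi$, and then concludes directly from $\mathcal{L}^{marginal}(\phi_m)\geq\mathcal{L}^{marginal}(\phi_j)$. The chain-rule decomposition you flag as a tempting but unnecessary detour is exactly what the paper presents \emph{after} the proof as additional intuition, not as part of the argument itself.
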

The proof can be found in Appendix~\ref{app:hierarchical}.

It means that if the likelihood function does not depend on the hyperparameters $\alpha$, it is beneficial to consider the semi-implicit formulation instead of the joint formulation of variational inference even if the approximation family stays exactly the same.
In the experiments, we show that the proposed DSIVI procedure matches the performance of direct optimization of $\mathcal{L}^{marginal}$, whereas joint VI performs much worse. 

\section{EXPERIMENTS}


\subsection{Variational Inference with Hierarchical Priors}
We consider a Bayesian neural network with a fully-factorized hierarchical prior distribution with a Gaussian conditional $p(w_{ij}\cond \alpha_{ij})=\Normal(w_{ij}\cond 0, \alpha_{ij}^{-1})$ and a Gamma hyperprior over the inverse variances $p(\alpha_{ij})=\mathrm{Gamma}(\alpha_{ij}\cond 0.5, 2)$.
Such hierarchical prior induces a fully-factorized Student's t-distribution with one degree of freedom as the marginal prior $p(w_{ij})=t(w_{ij}\cond\nu=1)$.
Note that in this case, we can estimate the marginal evidence lower bound directly.
We consider a fully-connected neural network with two hidden layers of 300 and 100 neurons on the MNIST dataset~\cite{lecun1998gradient}.
We train all methods with the same hyperparameters: we use batch size 200, use Adam optimizer~\cite{kingma2014adam} with default parameters, starting with learning rate $10^{-3}$, and train for 200 epochs, using linear learning rate decay.

We consider three different ways to perform inference in this model, the marginal inference, the joint inference, and DSIVI, as described in Section~\ref{sec:app-hp}.
For joint inference, we consider a fully-factorized joint approximate posterior $q_\phi(w,\alpha)=q_\phi(w)q_\phi(\alpha)$, with $q_\phi(w)$ being a fully-factorized Gaussian, and $q_\phi(\alpha)$ being a fully-factorized Log-Normal distribution. 
Such joint approximate posterior induces a fully-factorized Gaussian marginal posterior $q_\phi(w)$.
Therefore, we use a fully-factorized Gaussian posterior for the marginal inference and DSIVI.
Note that in this case, only the prior distribution is semi-implicit.
All models have been trained with the local reparameterization trick~\cite{kingma2015variational}.

We perform inference using these three different variational objectives, and then compare the true evidence lower bound $\mathcal{L}^{marginal}$ on the training set.
As the marginal variational approximation is the same in all four cases, the training ELBO can act as a proxy metric for the KL-divergence between the marginal approximate posterior and the true marginal posterior.
The results are presented in Figure~\ref{fig:students}.
DSIVI with as low as $K=10$ samples during training exactly matches the performance of the true marginal variational inference, whereas other approximations fall far behind.
All three methods achieve $97.7-98.0\%$ test set accuracy, and the test log-likelihood is approximately the same for all methods, ranging from $-830$ to $-855$.
However, the difference in the marginal ELBO is high.
The final values of the ELBO, its decomposition into train log-likelihood and the KL term, and the test log-likelihood are presented in Table~\ref{tab:students} in Appendix~\ref{app:hierarchical}.

\begin{figure}[t]
\includegraphics[width=\columnwidth]{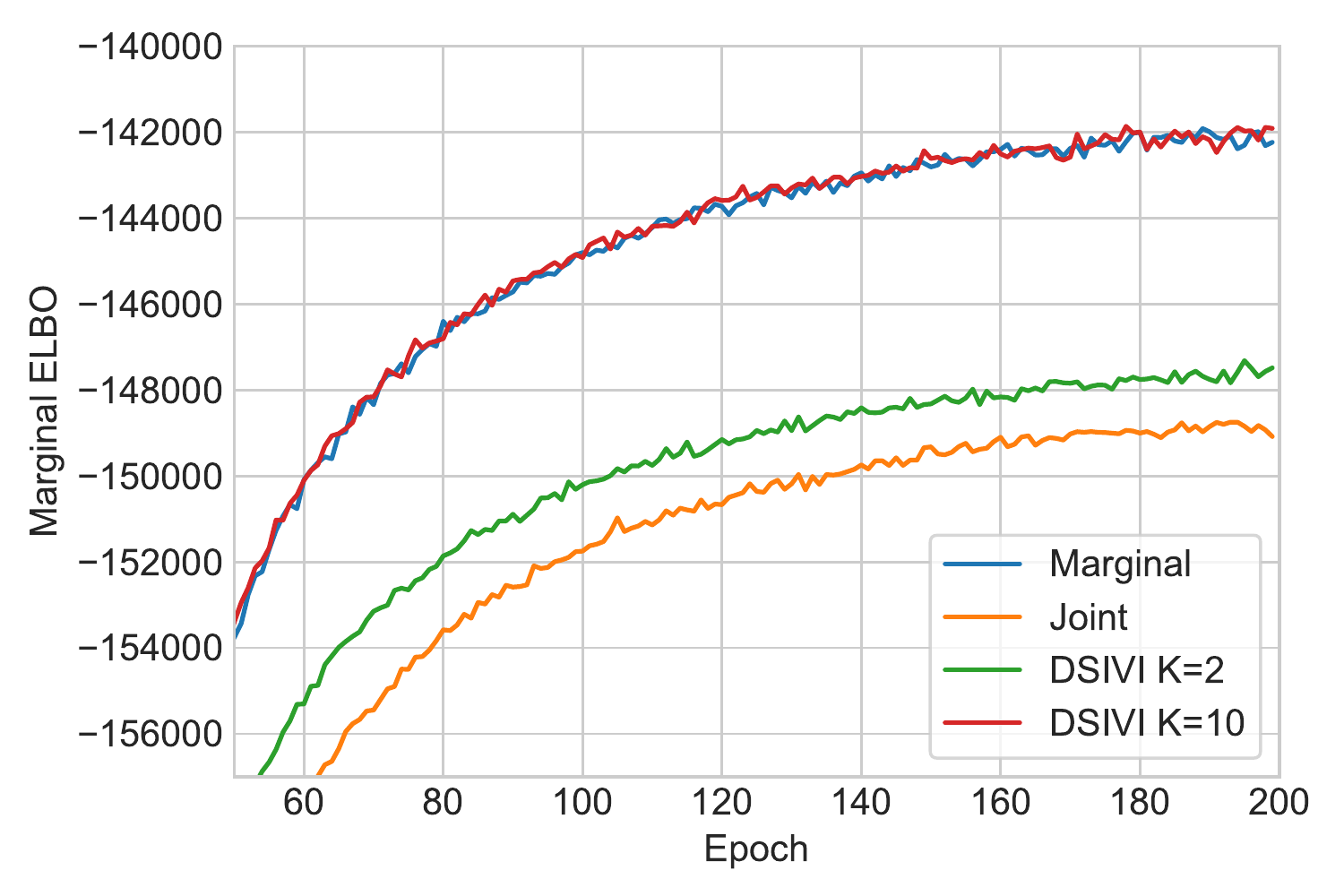}
\caption{Variational inference with a hierarchical prior. Models are trained using different variational objectives. The estimates of the marginal evidence lower bound are presented in this plot.}
\label{fig:students}
\end{figure}
\vspace{-0.5em}
\subsection{Comparison to Alternatives}
We compare DSIVI to other methods for implicit VI on a toy problem of approximating a centered standard Student's t-distribution $p(z)$ with 1 degrees of freedom with a Laplace distribution $q_\phi(z)$ by representing them as scale mixtures of Gaussians.
Namely, we represent $p(z)=\int \Normal(z\cond 0, \alpha^{-1}) \mathrm{Gamma}(\alpha\cond 0.5, 2)\,d\alpha$, and $q_\phi(z)=\int \Normal(z\cond \mu, \tau) \mathrm{Exp}(\tau\cond \lambda)\,d\tau$.
We train all methods by minimizing the corresponding approximations to the KL-divergence $\KL(q_\phi(z)\,\|\,p(z))$ w.r.t. the parameters $\mu$ and $\lambda$ of the approximation $q_\phi(z)$.

As baselines, we use prior methods for implicit VI: Adversarial Variational Bayes (AVB) \cite{mescheder2017}, which is a discriminator-based method, and Kernel Implicit Variational Inference (KIVI) \cite{shi2017}.
For AVB we fix architecture of the ``discriminator'' neural network to have 2 hidden layers with 3 and 4 hidden units with LeakyReLU ($\alpha = 0.2$) activation, and for KIVI we use fixed $\lambda = 0.001$ with varying number of samples.
For AVB we tried different numbers of training samples and optimization steps to optimize the discriminator at each step of optimizing over $\phi$.
We used Adam optimizer with learning rate $10^{-2}$ and one MC sample to estimate gradients w.r.t. $\phi$.

We report the KL-divergence $KL(q_\phi(z)\,\|\,p(z))$, estimated using 10000 MC samples averaged over 10 runs.
The results are presented in Figure~\ref{fig:comparison}.
DSIVI converges faster, is more stable, and only has one hyperparameter, the number of samples $K$ in the DSIVI objective.

\begin{figure}[t]
\vspace{-1em}
\hspace{-.15in}
\includegraphics[width=\columnwidth]{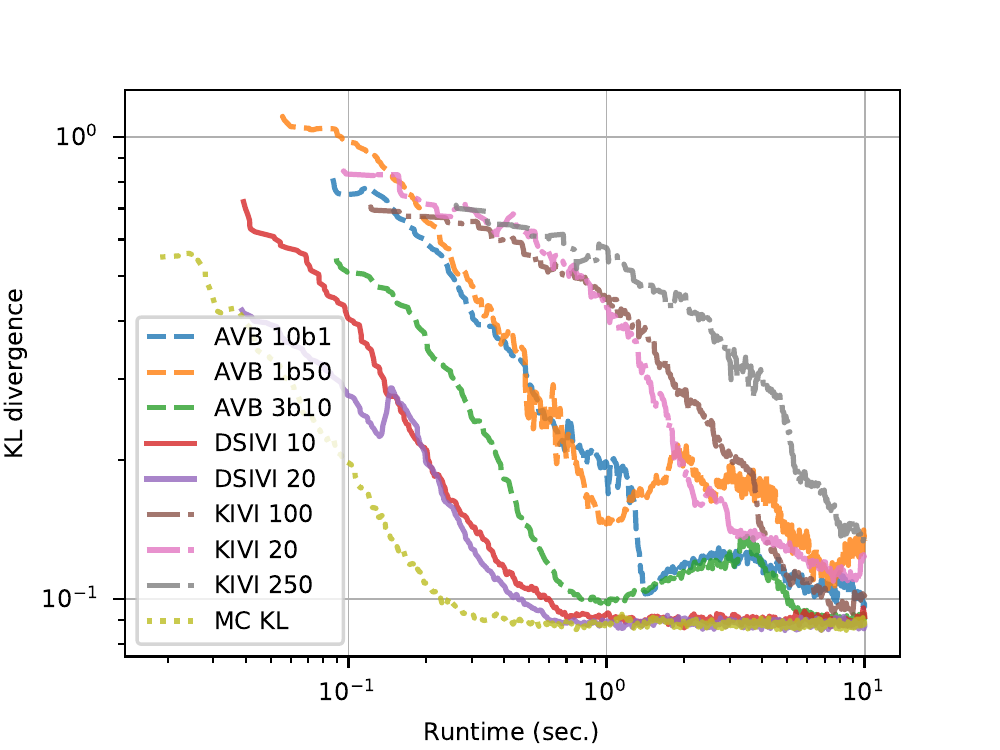}
\caption{Comparison of different techniques for \mbox{(semi-)implicit} VI. ``KIVI \emph{K}'' corresponds to KIVI, \emph{K} being the number of MC samples, used to approximate KL divergence; ``DSIVI \emph{K}'' corresponds to DSIVI with K=\emph{K}; ``AVB \emph{M}b\emph{K}'' corresponds to AVB with \emph{M} updates of discriminator per one update of $\phi$ and \emph{K} MC samples to estimate the discriminator's gradients. ``MC KL'' corresponds to direct stochastic minimization of the KL divergence.}
\label{fig:comparison}
\end{figure}

\subsection{Sequential Approximation}

We illustrate the expressiveness of DSIVI with implicit prior and posterior distributions on the following toy problem.
Consider an explicit distribution~$p(z)$.
We would like to learn a semi-implicit distribution $q_{\phi_1}(z)=\mean_{q_{\phi_1}(\psi)}[q_{\phi_1}(z\cond\psi)]$ to match $p(z)$.
During the first step, we apply DSIVI to tune the parameters~$\phi_1$ so as to minimize $\KL(q_{\phi_1}(z)\,\|\,p(z))$.
Then, we take the trained semi-implicit $q_{\phi_1}(z)$ as a new target for $z$ and tune $\phi_2$ minimizing $\KL(q_{\phi_2}(z)\,\|\,q_{\phi_1}(z))$.
After we repeat the iterative process $k$~times, $q_{\phi_k}(z)$ obtained through minimization of $\KL(q_{\phi_k}(z)\,\|\,q_{\phi_{k-1}}(z))$ should still match~$p(z)$.

In our experiments, we follow \cite{yin2018} and model $q_{\phi_i}(\psi)$ by a multi-layer perceptron (MLP) with layer widths [30,60,30] with ReLU activations and a ten-dimensional standard normal noise as its input.
We also fix all conditionals $q_{\phi_i}(z \cond \psi) = \Normal(z \cond \psi, \sigma^2I)$, $\sigma^2 = 0.1$.
We choose $p(z)$ to be either a one-dimensional mixture of Gaussians or a two-dimensional ``banana'' distribution.
In Figure~\ref{fig:kl_degradation} we plot values of $\KL(q_{\phi_i}(z)\,\|\,p(z))$, $i=1, \dotsc, 9$ for different values of $K_1 = K_2 = K$ (see Algorithm~\ref{alg:dsivi}) when $p(z)$ is a one-dimensional mixture of Gaussians (see Appendix~\ref{app:seq-approx} for a detailed description and additional plots).
In Figure~\ref{fig:mixture_end} we plot the approximate PDF of $q_{\phi_k}(z)$ after 9 steps for different values of $K$.
As we can see, even though both ``prior'' and ``posterior'' distributions are semi-implicit, the algorithm can still accurately learn the original target distribution after several iterations.

\begin{figure}[t]
\hspace{-.15in}
\includegraphics[width=\columnwidth]{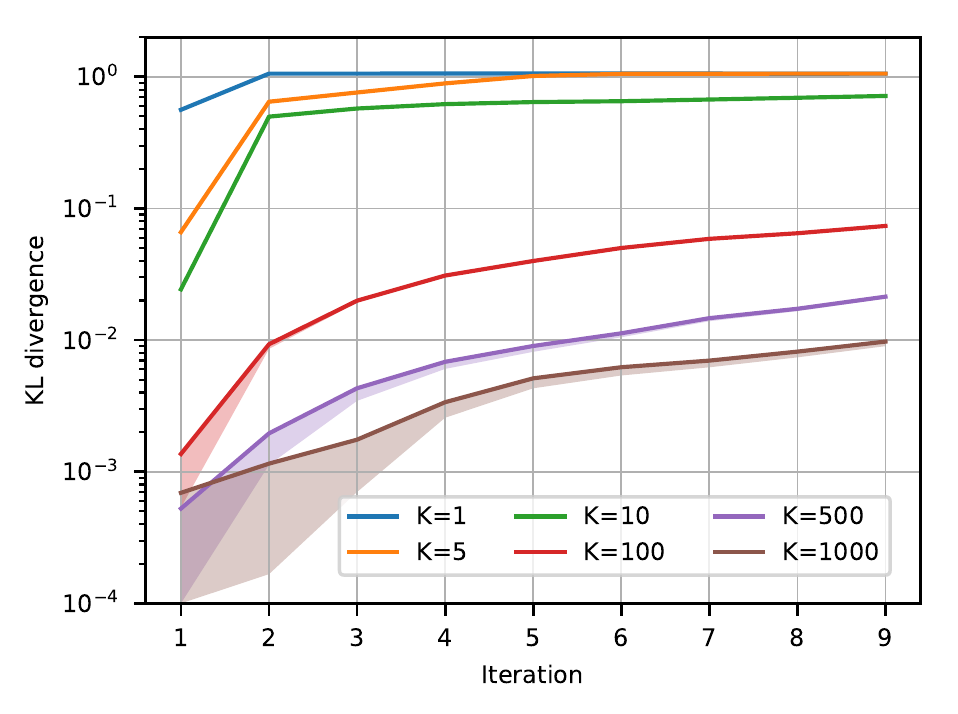}
\caption{Sequential approximation. Area is shaded between lower and upper bounds of $\KL(q_{\phi_i}(z)\,\|\,p(z))$ for different \emph{training} values of $K_1 = K_2 = K$, and the solid lines represent the corresponding upper bounds. During \emph{evaluation}, $K = 10^4$ is used. Here $p(z)$ is a one dimensional Gaussian mixture (see Appendix~\ref{app:seq-approx} for details.)  Lower is better.}
\label{fig:kl_degradation}
\end{figure}

\begin{figure}[t]
\hspace{-.15in}
\includegraphics[width=\columnwidth]{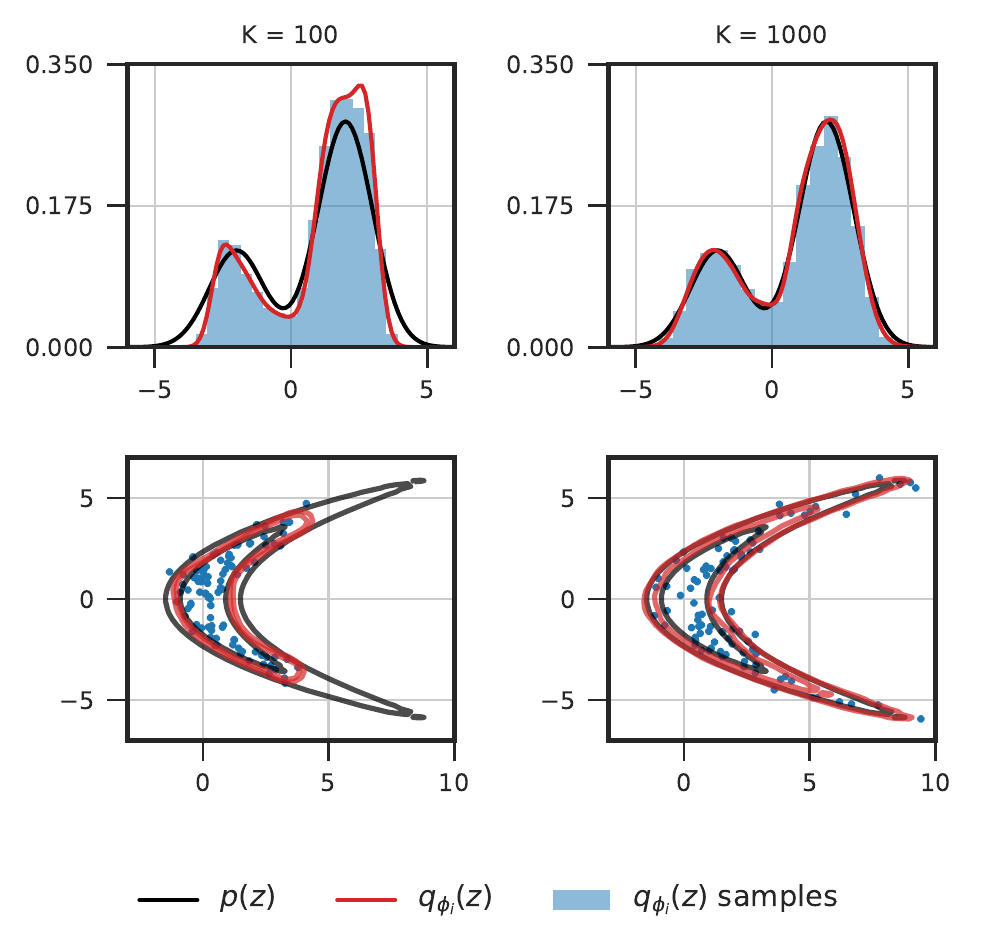}
\caption{Learned probability distributions $q_{\phi_k}(z)$ after 9 iterations of sequential approximation for $K_1 = K_2 = 100$ and $1000$ (red), and the two original priors $p(z)$ (black). During evaluation, $K = 10^4$.}
\label{fig:mixture_end}
\end{figure}

\subsection{VAE with Semi-Implicit Optimal Prior}
We follow the same experimental setup and use the same hyperparameters, as suggested for VampPrior \cite{tomczak2017}.
We consider two architectures, the VAE and the HVAE (hierarchical VAE, \cite{tomczak2017}), applied to the MNIST dataset with dynamic binarization~\cite{salakhutdinov2008quantitative}.
In both cases, all distributions (except the prior) have been modeled by fully-factorized neural networks with two hidden layers of 300 hidden units each.
We used 40-dimensional latent vectors $z$ (40-dimensional $z_1$ and $z_2$ for HVAE) and  Bernoulli likelihood with dynamic binarization for the MNIST dataset.
As suggested in the VampPrior paper, we used 500 pseudo-inputs for VampPrior-based models in all cases (higher number of pseudo-inputs led to overfitting).
To measure the performance of all models, we bound the test log-likelihood with the IWAE objective \cite{burda2015} with 5000 samples for the VampPrior-based methods, and estimate the corresponding IW-DSIVAE lower bound with $K=20000$ for the DSIVI-based methods (see Appendix~\ref{app:iw-dsivae} for more details).

\begin{table}[t]
\caption{We compare VampPrior with its semi-implicit modifications, DSIVI-agg and DSIVI-prior. We report the the IWAE objective $\mathcal{L}^S$ for VampPrior-data, and the corresponding lower bound $\underline{\mathcal{L}}_K^{p,S}$ for DSIVI-based methods (see Appendix~\ref{app:iw-dsivae}). Only the prior distribution is semi-implicit.} \label{tab:vae}
\begin{center}
\begin{tabular}{lr}
Method & LL\\
\hline
VAE+VampPrior-data & $-85.05$\\
VAE+VampPrior & $-82.38$\\
VAE+DSIVI-prior (K=2000) & $\geq-82.27$\\
VAE+DSIVI-agg (K=500) & $\geq -83.02$\\
VAE+DSIVI-agg (K=5000) & $\geq \mathbf{-82.16}$\\
\hline
HVAE+VampPrior-data & $-81.71$\\
HVAE+VampPrior & $-81.24$\\
HVAE+DSIVI-agg (K=5000) & $\geq \mathbf{-81.09}$\\
\hline
\end{tabular}
\end{center}
\vspace{-1em}
\end{table}

We consider two formulations, described in Section~\ref{sec:app-vae}: DSIVI-agg stands for the semi-implicit formulation of the aggregated posterior~\eqref{eq:vae-agg}, and DSIVI-prior stands for a general semi-implicit prior~\eqref{eq:vae-sivi}.
For the DSIVI-prior we have used a fully-factorized Gaussian conditional $p(z\cond\zeta)=\Normal(z\cond\zeta,\diag(\sigma^2))$, where the mixing parameters $\zeta$ are the output of a fully-connected neural network with two hidden layers with 300 and 600 hidden units respectively, applied to a 300-dimensional standard Gaussian noise $\epsilon$.
The first and second hidden layers were followed by ReLU non-linearities, and no non-linearities were applied to obtain $\zeta$.
We did not use warm-up \cite{tomczak2017} with DSIVI-prior.

The results are presented in Table~\ref{tab:vae}.
DSIVI-agg is a simple modification of VampPrior-data that significantly improves the test log-likelihood, and even outperforms the VampPrior with trained inducing inputs.
DSIVI-prior outperforms VampPrior even without warm-up and without coupling the parameters of the prior and the variational posteriors.

\section{CONCLUSION}
We have presented DSIVI, a general-purpose framework that allows to perform variational inference and variational learning when both the approximate posterior distribution and the prior distribution are semi-implicit.
DSIVI provides an asymptotically exact lower bound on the ELBO, and also an upper bound that can be made arbitrarily tight.
It allows us to estimate the ELBO in any model with semi-implicit distributions, which was not the case for other methods.
We have shown the effectiveness of DSIVI applied to a range of problems, e.g. models with hierarchical priors and variational autoencoders with semi-implicit empirical priors.
In particular, we show how DSIVI-based treatment improves the performance of VampPrior, the current state-of-the-art prior distribution for VAE. 

\section*{ACKNOWLEDGMENTS}
We would like to thank Arsenii Ashukha and Kirill Neklyudov for valuable discussions. Dmitry Molchanov and Valery Kharitonov were supported by Samsung Research, Samsung Electronics.


\bibliographystyle{abbrv}
\bibliography{main.bib}

\clearpage
\appendix
\section{Proof of the SIVI Lower Bound for Semi-Implicit Posteriors}
\label{app:lower-posterior}
\setcounter{theorem}{0}
\begin{theorem}
Consider $\mathcal{L}$ and $\underline{\mathcal{L}}_K^q$ defined as in Eq.~\eqref{eq:elbo} and \eqref{eq:lower_elbo}.
Then $\underline{\mathcal{L}}_K^q$ converges to $\mathcal{L}$ from below as~$K \to~\infty$, satisfying $\underline{\mathcal{L}}_K^q\leq\underline{\mathcal{L}}_{K+1}^q\leq\mathcal{L}$, and 
\begin{align}
    &\underline{\mathcal{L}}_K^q=\mean_{\psi^{0..K}\sim q_\phi(\psi)}\mean_{q_\phi^K(z\,|\,\psi^{0..K})}\log{\frac{p(x\cond z)p(z)}{q_\phi^K(z\,|\,\psi^{0..K})}},\label{eq:sivil-2}\\
    &\text{where }q_\phi^K(z\,|\,\psi^{0..K})=\frac1{K+1}\sum_{k=0}^Kq_\phi(z\cond\psi^k).
\end{align}
\end{theorem}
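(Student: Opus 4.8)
The plan is to first establish the representation~\eqref{eq:sivil-2}, and then read off monotonicity, the upper bound, and convergence by reinterpreting the inner quantity as an ordinary ELBO for the finite mixture $q_\phi^K$ and appealing to convexity of the KL divergence. The starting point is the definition in~\eqref{eq:lower_elbo}, whose second term draws $z$ from $q_\phi(z\cond\psi^0)$ only. The key observation is that the integrand $\log q_\phi^K(z\cond\psi^{0..K})$ is symmetric in $\psi^0,\dots,\psi^K$, while these are i.i.d.\ from $q_\phi(\psi)$; hence conditioning $z$ on $\psi^0$ can be replaced by conditioning on any $\psi^j$ without changing the expectation, and averaging over $j=0,\dots,K$ replaces $z\sim q_\phi(z\cond\psi^0)$ with $z\sim q_\phi^K(z\cond\psi^{0..K})$. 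For the first term I would separately note that the marginal law of $z$ under ``draw $\psi^{0..K}$ i.i.d., then $z\sim q_\phi^K$'' is exactly $q_\phi(z)$, since $\int q_\phi^K(z\cond\psi^{0..K})\prod_{k=0}^{K}q_\phi(\psi^k)\,d\psi^{0..K}=\frac1{K+1}\sum_{k=0}^{K}\int q_\phi(z\cond\psi^k)q_\phi(\psi^k)\,d\psi^k=q_\phi(z)$. Combining the two terms yields~\eqref{eq:sivil-2}.

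Given the identity, I would use the standard rewriting $\mean_{q}\log\frac{p(x\cond z)p(z)}{q}=\log p(x)-\KL(q\,\|\,p(z\cond x))$ to express $\underline{\mathcal{L}}_K^q=\log p(x)-\mean_{\psi^{0..K}}\KL(q_\phi^K(\cdot\cond\psi^{0..K})\,\|\,p(z\cond x))$ and $\mathcal{L}=\log p(x)-\KL(q_\phi\,\|\,p(z\cond x))$. Since $q_\phi=\mean_{\psi^{0..K}}q_\phi^K(\cdot\cond\psi^{0..K})$ by the marginalization above, convexity of $\KL(\cdot\,\|\,p(z\cond x))$ in its first argument (Jensen) gives $\KL(q_\phi\,\|\,p(z\cond x))\le\mean_{\psi^{0..K}}\KL(q_\phi^K\,\|\,p(z\cond x))$, which is exactly $\underline{\mathcal{L}}_K^q\le\mathcal{L}$.

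For monotonicity I would use the leave-one-out trick familiar from the IWAE analysis: given $K+2$ i.i.d.\ samples, $q_\phi^{K+1}$ equals the uniform average over the $K+2$ mixtures $q^{(-j)}$ formed by dropping $\psi^j$ (each conditional reappearing in exactly $K+1$ of them). Convexity of KL then gives $\KL(q_\phi^{K+1}\,\|\,p(z\cond x))\le\frac1{K+2}\sum_{j}\KL(q^{(-j)}\,\|\,p(z\cond x))$; taking expectations and noting that each $q^{(-j)}$ is distributionally a $q_\phi^K$ over $K+1$ i.i.d.\ draws yields $\mean\,\KL(q_\phi^{K+1}\,\|\,p)\le\mean\,\KL(q_\phi^K\,\|\,p)$, hence $\underline{\mathcal{L}}_K^q\le\underline{\mathcal{L}}_{K+1}^q$. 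Convergence then follows because $q_\phi^K(z\cond\psi^{0..K})\to q_\phi(z)$ by the law of large numbers, so the inner KL tends to $\KL(q_\phi\,\|\,p(z\cond x))$; combined with monotonicity and the upper bound this forces $\lim_K\underline{\mathcal{L}}_K^q=\mathcal{L}$.

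The step I expect to be the main obstacle is the convergence claim: passing the almost-sure limit $q_\phi^K\to q_\phi$ through the logarithm and the two nested expectations requires a dominated-convergence or uniform-integrability argument, since $\log$ of a Monte Carlo average need not be integrable without mild regularity on the tails of $q_\phi(z\cond\psi)$. The symmetry step and the leave-one-out bookkeeping, by contrast, are purely combinatorial and should go through directly.
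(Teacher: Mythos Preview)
Your proposal is correct, and the representation step via index symmetry is exactly what the paper does. Where you diverge is in the two inequalities. You pass through the decomposition $\log p(x)-\KL(\cdot\,\|\,p(z\cond x))$ and invoke convexity of $\KL$ in its first argument, and for monotonicity you use the IWAE-style leave-one-out identity $q_\phi^{K+1}=\tfrac{1}{K+2}\sum_j q^{(-j)}$. The paper instead computes the gaps directly: after rewriting $\mathcal{L}$ in the same expectations it obtains
\[
\mathcal{L}-\underline{\mathcal{L}}_K^q=\mean_{\psi^{0..K}}\KL\bigl(q_\phi^K(\cdot\cond\psi^{0..K})\,\bigl\|\,q_\phi\bigr),
\]
and, by applying the same symmetry trick once more to write both $\underline{\mathcal{L}}_K^q$ and $\underline{\mathcal{L}}_{K+1}^q$ as expectations over $\psi^{0..K+1}$ with inner sampling $z\sim q_\phi^K(\cdot\cond\psi^{0..K})$,
\[
\underline{\mathcal{L}}_{K+1}^q-\underline{\mathcal{L}}_K^q=\mean_{\psi^{0..K+1}}\KL\bigl(q_\phi^K(\cdot\cond\psi^{0..K})\,\bigl\|\,q_\phi^{K+1}(\cdot\cond\psi^{0..K+1})\bigr).
\]
The paper's route thus avoids introducing the true posterior $p(z\cond x)$ and the leave-one-out bookkeeping, and the gaps it exhibits are arguably more informative (KL divergences between the finite mixture and its limit or its refinement, rather than differences of KLs to an unrelated target). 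Your route, on the other hand, makes the link to the IWAE monotonicity analysis explicit and generalizes cleanly. Neither argument goes beyond monotonicity plus pointwise LLN for the convergence claim; your caveat about needing a dominated-convergence or uniform-integrability condition is well placed, and the paper's proof is equally silent on this point.
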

\begin{proof}
For brevity, we denote $\mean_{\psi^{0..K}\sim q_\phi(\psi)}$ as $\mean_{\psi^{0..K}}$ and $\mean_{z\sim q_\phi^K(z\cond\psi^{0..K})}$ as $\mean_{z\cond\psi^{0..K}}$.
First, notice that due to the symmetry in the indices, the regularized lower bound $\underline{\mathcal{L}}_K^q$ does not depend on the index in the conditional $q_\phi(z\cond\psi^i)$:
\begin{align}
    \underline{\mathcal{L}}_K^q&=\mean_{\psi^{0..K}}\mean_{z\cond\psi^0}\log\frac{p(x,z)}{q_\phi^K(z\cond\psi^{0..K})}=\\
    &=\mean_{\psi^{0..K}}\mean_{z\cond\psi^i}\log\frac{p(x,z)}{q_\phi^K(z\cond\psi^{0..K})}.
\end{align}
Therefore, we can rewrite $\underline{\mathcal{L}}_K^q$ as follows:
\begin{align}
    &\underline{\mathcal{L}}_K^q=\frac1{K+1}\sum_{i=0}^K\underline{\mathcal{L}}_K^q=\\
    &=\frac1{K+1}\sum_{i=0}^K\mean_{\psi^{0..K}}\mean_{z\cond\psi^i}\log\frac{p(x,z)}{q_\phi^K(z\cond\psi^{0..K})}=\\
    &=\mean_{\psi^{0..K}}\mean_{z\cond\psi^{0..K}}\log\frac{p(x,z)}{q_\phi^K(z\cond\psi^{0..K})}.
\end{align}
Note that it is just the value of the evidence lower bound with the approximate posterior $q_\phi^K(z\cond\psi^{0..K})$, averaged over all values of $\psi^{0..K}$.
We can also use that $\mean_{\psi^{0..K}}q_\phi^K(z\cond\psi^{0..K})=q_\phi(z)$ to rewrite the true ELBO in the same expectations:
\begin{align}
    &\mathcal{L}=\mean_{q_\phi(z)}\log\frac{p(x,z)}{q_\phi(z)}=\\
    &=\mean_{\psi^{0..K}}\mean_{z\cond\psi^{0..K}}\log\frac{p(x,z)}{q_\phi(z)}.
\end{align}
We want to prove that $\mathcal{L}\geq\underline{\mathcal{L}}_K^q$.
Consider their difference $\mathcal{L}-\underline{\mathcal{L}}_K^q$:
\begin{align}
    &\mathcal{L}-\underline{\mathcal{L}}_K^q=\\
    &=\mean_{\psi^{0..K}}\mean_{z\cond\psi^{0..K}}\log\frac{q_\phi^K(z\cond\psi^{0..K})}{q_\phi(z)}=\\
    &=\mean_{\psi^{0..K}}\KL\left(q_\phi^K(z\cond\psi^{0..K})\,\|\,q_\phi(z)\right)\geq 0.
\end{align}
We can use the same trick to prove that this bound is non-decreasing in $K$.
First, let's use the symmetry in the indices once again, and rewrite $\underline{\mathcal{L}}_K^q$ and $\underline{\mathcal{L}}_{K+1}^q$ in the same expectations:
\begin{align}
    &\underline{\mathcal{L}}_K^q=\mean_{\psi^{0..K}}\mean_{z\cond\psi^{0..K}}\log\frac{p(x,z)}{q_\phi^K(z\cond\psi^{0..K})}=\\
    &=\mean_{\psi^{0..K+1}}\mean_{z\cond\psi^{0..K}}\log\frac{p(x,z)}{q_\phi^K(z\cond\psi^{0..K})},\\
    &\underline{\mathcal{L}}_{K+1}^q=\mean_{\psi^{0..K+1}}\mean_{z\cond\psi^0}\log\frac{p(x,z)}{q_\phi^{K+1}(z\cond\psi^{0..K+1})}=\\
    &=\mean_{\psi^{0..K+1}}\mean_{z\cond\psi^{0..K}}\log\frac{p(x,z)}{q_\phi^{K+1}(z\cond\psi^{0..K+1})}.
\end{align}
Then their difference would be equal to the expected KL-divergence, hence being non-negative:
\begin{align}
    &\underline{\mathcal{L}}_{K+1}^q-\underline{\mathcal{L}}_{K}^q=\\
    &=\mean_{\psi^{0..K+1}}\mean_{z\cond\psi^{0..K}}\log\frac{q_\phi^K(z\cond\psi^{0..K})}{q_\phi^{K+1}(z\cond\psi^{0..K+1})}=\\
    &=\mean_{\psi^{0..K+1}}\KL\left(q_\phi^K(z\cond\psi^{0..K})\,\|\,q_\phi^{K+1}(z\cond\psi^{0..K+1}))\right) \nonumber\\
    &\geq 0.\nonumber
\end{align}
\end{proof}


\section{Importance Weighted Doubly Semi-Implicit VAE}
\label{app:iw-dsivae}
The standard importance-weighted lower bound for VAE is defined as follows:
\begin{equation}
    \log p(x)\geq \mathcal{L}^S=\mean_{z^{1..S}\sim q_\phi(z)}\log\frac1S\sum_{i=1}^S\frac{p(x\cond z^i)p(z^i)}{q_\phi(z_i\cond x)}
\end{equation}
We propose IW-DSIVAE, a new lower bound on the IWAE objective, that is suitable for VAEs with semi-implicit priors and posteriors:
\begin{gather}
    \begin{multlined}
        \underline{\underline{\mathcal{L}}}_{K_1,K_2}^{q,p,S}=\mean_{\psi^{1..K_1}\sim q_\phi(\psi)}\mean_{\zeta^{1..K_2}\sim p_\theta(\zeta)}\left[\phantom{\frac1S}\right.\\
        \mean_{(z^1,\hat{\psi}^1),\dots,(z^S,\hat{\psi}^S)\sim q_\phi(z,\psi)}\left[\phantom{\frac1S}\right.\\
        \left.\left.\log\frac1S\sum_{i=1}^S\frac{p(x\cond z^i)\frac{1}{K_2}\sum_{k=1}^{K_2}p_\theta(z^i\cond\zeta^k)}{\frac{1}{K_1+1}(q_\phi(z^i\cond\hat{\psi}^i)+\sum_{k=1}^{K_1}q_\phi(z^i\cond\psi^k))}\right]\right].
    \end{multlined}
\end{gather}
This objective is a lower bound on the IWAE objective ($\underline{\underline{\mathcal{L}}}_{K_1,K_2}^{q,p,S}\leq \mathcal{L}^S$), is non-decreasing in both $K_1$ and $K_2$, and is asymptotically exact ($\underline{\underline{\mathcal{L}}}_{\infty,\infty}^{q,p,S}=\mathcal{L}^S$).

\section{Variational inference with hierarchical priors}
\label{app:hierarchical}
\begin{theorem}
Consider two different variational objectives $\mathcal{L}^{joint}$ and $\mathcal{L}^{marginal}$. Then
\begin{align}
    &\mathcal{L}^{joint}(\phi)=\mean_{q_\phi(w,\alpha)}\log\frac{p(t\cond x, w)p(w\cond\alpha)p(\alpha)}{q_\phi(w,\alpha)}\\
    &\mathcal{L}^{marginal}(\phi)=\mean_{q_\phi(w)}\log\frac{p(t\cond x, w)p(w)}{q_\phi(w)}
\end{align}
Let $\phi_{j}$ and $\phi_{m}$ maximize $\mathcal{L}^{joint}$ and $\mathcal{L}^{marginal}$ correspondingly.
Then $q_{\phi_m}(w)$ is a better fit for the marginal posterior that $q_{\phi_j}(w)$ in terms of the KL-divergence:
\begin{multline}
    \label{eq:kl-inequality2}
    \KL(q_{\phi_m}(w)\,\|\,p(w\cond X_{tr}, T_{tr}))\leq\\
    \KL(\,q_{\phi_{j}}(w)\,\,\|\,p(w\cond X_{tr}, T_{tr}))
\end{multline}
\end{theorem}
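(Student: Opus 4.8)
The plan is to reduce the claimed inequality to the elementary fact that $\phi_m$, being the maximizer of $\mathcal{L}^{marginal}$, by construction minimizes the very KL-divergence appearing on the left-hand side. First I would establish the standard ELBO decomposition for the marginal objective. The key structural observation is that the likelihood $p(t\cond x, w)$ does not depend on $\alpha$, so marginalizing the joint model over $\alpha$ gives $p(w, T_{tr}\cond X_{tr}) = p(T_{tr}\cond X_{tr}, w)\,p(w)$, where $p(w) = \int p(w\cond\alpha)p(\alpha)\,d\alpha$ is exactly the semi-implicit marginal prior. Dividing by $q_\phi(w)$, taking logs and expectations under $q_\phi(w)$ yields
\[
\mathcal{L}^{marginal}(\phi) = \log p(T_{tr}\cond X_{tr}) - \KL(q_\phi(w)\,\|\,p(w\cond X_{tr}, T_{tr})).
\]

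Since the evidence $\log p(T_{tr}\cond X_{tr})$ is a constant independent of $\phi$, maximizing $\mathcal{L}^{marginal}$ over $\phi$ is identical to minimizing $\KL(q_\phi(w)\,\|\,p(w\cond X_{tr}, T_{tr}))$ over the shared family of induced marginals $\{q_\phi(w)\}$. Hence $\phi_m$ attains the smallest such KL-divergence over the whole family, and in particular beats $\phi_j$:
\[
\KL(q_{\phi_m}(w)\,\|\,p(w\cond X_{tr}, T_{tr})) \leq \KL(q_{\phi_j}(w)\,\|\,p(w\cond X_{tr}, T_{tr})),
\]
which is the desired inequality. I would note that this already finishes the proof, and that the inequality in fact holds with $\phi_j$ replaced by \emph{any} parameter $\phi$; the role of $\phi_j$ as the joint maximizer is not logically required for the bound itself.

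To explain why the two maximizers genuinely differ — and why the gap is not vacuous — I would add the complementary observation that $\mathcal{L}^{joint}(\phi)\leq\mathcal{L}^{marginal}(\phi)$ pointwise. This follows from the chain rule for KL-divergence applied to the true joint posterior,
\[
\KL(q_\phi(w,\alpha)\,\|\,p(w,\alpha\cond X_{tr}, T_{tr})) = \KL(q_\phi(w)\,\|\,p(w\cond X_{tr}, T_{tr})) + \mean_{q_\phi(w)}\KL(q_\phi(\alpha\cond w)\,\|\,p(\alpha\cond w, X_{tr}, T_{tr})),
\]
whose second term is non-negative; subtracting both sides from $\log p(T_{tr}\cond X_{tr})$ gives the pointwise inequality. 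Thus the joint objective is a uniformly looser surrogate that additionally penalizes mismatch in the conditional over $\alpha$, diverting optimization effort away from fitting the marginal.

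The main thing to get right is the first step: verifying the marginal decomposition, and in particular that the $\alpha$-independence of the likelihood is what makes $p(w, T_{tr}\cond X_{tr})$ factor as $p(T_{tr}\cond X_{tr}, w)\,p(w)$ with $p(w)$ equal to the marginal prior. Beyond that, the argument is a one-line optimality statement, so I do not expect a genuine analytical obstacle; the care is entirely in the bookkeeping of which quantities depend on $\phi$ and which are constants in the evidence.
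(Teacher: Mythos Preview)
Your proposal is correct and matches the paper's proof essentially line for line: the paper also observes that $\mathcal{L}^{marginal}(\phi)+\KL(q_\phi(w)\,\|\,p(w\cond X_{tr},T_{tr}))$ is constant, so the inequality follows immediately from $\mathcal{L}^{marginal}(\phi_m)\geq\mathcal{L}^{marginal}(\phi_j)$, and then adds the same pointwise gap $\mathcal{L}^{joint}(\phi)=\mathcal{L}^{marginal}(\phi)-\mean_{q_\phi(w)}\KL(q_\phi(\alpha\cond w)\,\|\,p(\alpha\cond w))$ as post-hoc commentary. The only cosmetic difference is that the paper decomposes the KL against the joint \emph{prior} $p(w,\alpha)$ rather than the joint posterior, but since the likelihood is $\alpha$-free one has $p(\alpha\cond w,X_{tr},T_{tr})=p(\alpha\cond w)$ and the two decompositions coincide.
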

\begin{proof}
Note that maximizing $\mathcal{L}^{marginal}(\phi)$ directly minimizes $\KL(q_{\phi}(w)\,\|\,p(w\cond X_{tr}, T_{tr}))$, as $\mathcal{L}^{marginal}(\phi)+\KL(q_{\phi}(w)\,\|\,p(w\cond X_{tr}, T_{tr}))=const$.
The sought-for inequality~\eqref{eq:kl-inequality2} then immediately follows from $\mathcal{L}^{marginal}(\phi_m)\geq\mathcal{L}^{marginal}(\phi_j)$.
\end{proof}

To see the cause of this inequality more clearly, consider $\mathcal{L}^{joint}(\phi)$:
\begin{align}
    &\mathcal{L}^{joint}(\phi)=\mean_{q_\phi(w,\alpha)}\log\frac{p(t\cond x, w)p(w\cond\alpha)p(\alpha)}{q_\phi(w,\alpha)}=\\
    &=\mean_{q_\phi(w)}\log p(t\cond x, w)-\KL(q_\phi(w,\alpha)\,\|\,p(w,\alpha))=\\
    &=\mean_{q_\phi(w)}\log p(t\cond x, w)-\KL(q_\phi(w)\,\|\,p(w))-\\
    &-\mean_{q_\phi(w)}\KL(q_\phi(\alpha\cond w)\,\|\,p(\alpha\cond w))=\\
    &=\mathcal{L}^{marginal}(\phi)-\mean_{q_\phi(w)}\KL(q_\phi(\alpha\cond w)\,\|\,p(\alpha\cond w))
\end{align}
If $\mathcal{L}^{joint}$ and $\mathcal{L}^{marginal}$ coincide, the inequality~\eqref{eq:kl-inequality2} becomes an equality.
However, $\mathcal{L}^{joint}$ and $\mathcal{L}^{marginal}$ only coincide if the reverse posterior $q_\phi(\alpha\cond w)$ is an exact match for the reverse prior $p(\alpha\cond w)$.
Due to the limitations of the approximation family of the joint posterior, this is not the case in many practical applications.
In many cases \cite{hernandez2015,louizos2017compression} the joint approximate posterior is modeled as a factorized distribution $q_\phi(w,\alpha)=q_\phi(w)q_\phi(\alpha)$.
Therefore in the case of the joint variational inference, we optimize a lower bound on the marginal ELBO and therefore obtain a sub-optimal approximation.

\begin{table}[h]
\caption{The values of the marginal ELBO, the train negative log-likelihood, the KL-divergence between the marginal posterior $q_\phi(w)$ and the marginal prior $p_\phi(w)$, and the test-set accuracy and negative log-likelihood for different inference procedures for a model with a standard Student's prior. The predictive distribution during test-time was estimated using 200 samples from the marginal posterior $q_\phi(w)$} \label{tab:students}
\begin{center}
\resizebox{\linewidth}{!}{
\begin{tabular}{l|ccc|cc}
       & \multicolumn{3}{c}{\textbf{Train}} & \multicolumn{2}{|c}{\textbf{Test}}\\
\textbf{Method} & \textbf{ELBO} & \textbf{NLL} & \textbf{KL} & \textbf{Acc.} & \textbf{NLL}\\
\hline
Marginal & $\mathbf{-1.42\times10^5}$ & $7.2\times10^3$ & $1.35\times10^5$ & $97.80$ & $855$\\
Joint & $-1.48\times10^5$ & $6.7\times10^3$ & $1.42\times10^5$ & $97.74$ & $831$\\
DSIVI(K=2) & $-1.47\times10^5$ & $7.0\times10^3$ & $1.41\times10^5$ & $97.75$ & $846$\\
DSIVI(K=10) & $\mathbf{-1.42\times10^5}$ & $7.2\times10^3$ & $1.35\times10^5$ & $97.76$ & $843$
\end{tabular}}
\end{center}
\end{table}

\section{Toy data for sequential approximation}
\label{app:seq-approx}
For sequential approximation toy task, we follow \cite{yin2018} and use the following target distributions.
For one-dimensional Gaussian mixture, $p(z) = 0.3 \Normal(z \,|\, {-2}, 1) + 0.7 \Normal(z \,|\, 2, 1)$.
For the ``banana'' distribution, $p(z_1, z_2) = \Normal(z_1 \,|\, z_2^2 / 4, 1) \Normal(z_2 \,|\, 0, 4)$.

For both target distributions, we optimize the objective using Adam optimizer with initial learning rate $10^{-2}$ and decaying it by $0.5$ every 500 steps.
On each iteration of sequential approximation, we train for 5000 steps.
We reinitialize all trainable parameters and optimizer statistics before each iteration.
Before each update of the parameters, we average 200 Monte Carlo samples of the gradients.
During evaluation, we used $10^5$ Monte Carlo samples to estimate the expectations involved in the lower and upper bounds on KL divergence.

\begin{figure}[h]
\hspace{-.15in}
\includegraphics[width=\columnwidth]{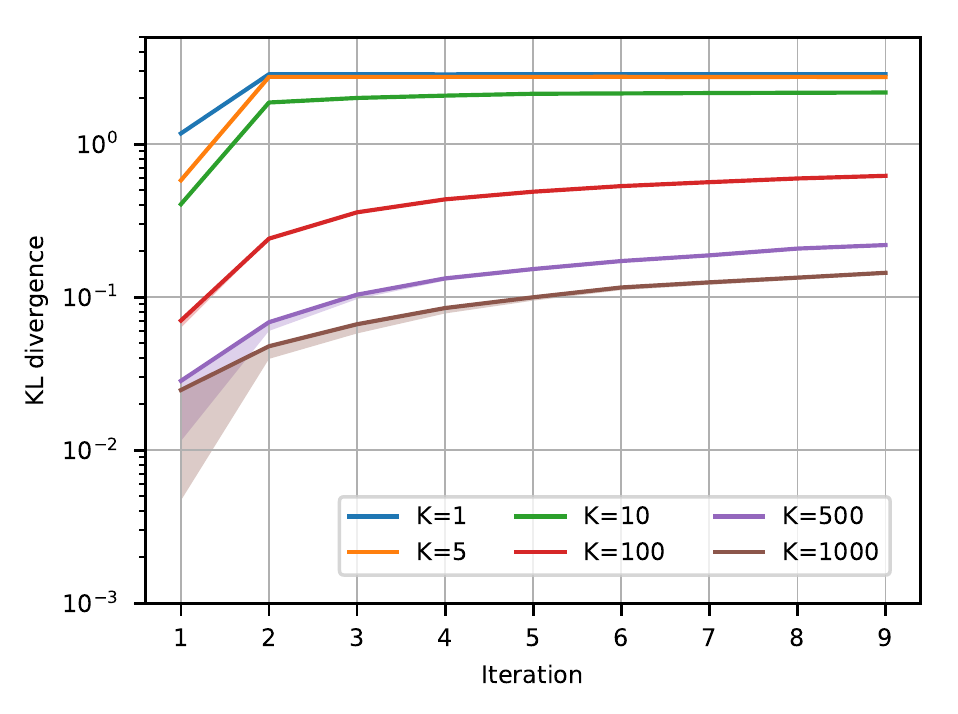}
\caption{Sequential approximation. Area is shaded between lower and upper bounds of $\KL(q_{\phi_i}(z)\,\|\,p(z))$ for different \emph{training} values of $K_1 = K_2 = K$, and the solid lines represent the corresponding upper bounds. During \emph{evaluation}, $K = 10^4$ is used. Here $p(z)$ is a two-dimensional ``banana``. Lower is better.}
\label{fig:kl_degradation_banana}
\end{figure}

\onecolumn

\begin{figure}[t]
\hspace{-.15in}
\centering{\includegraphics[width=\textwidth]{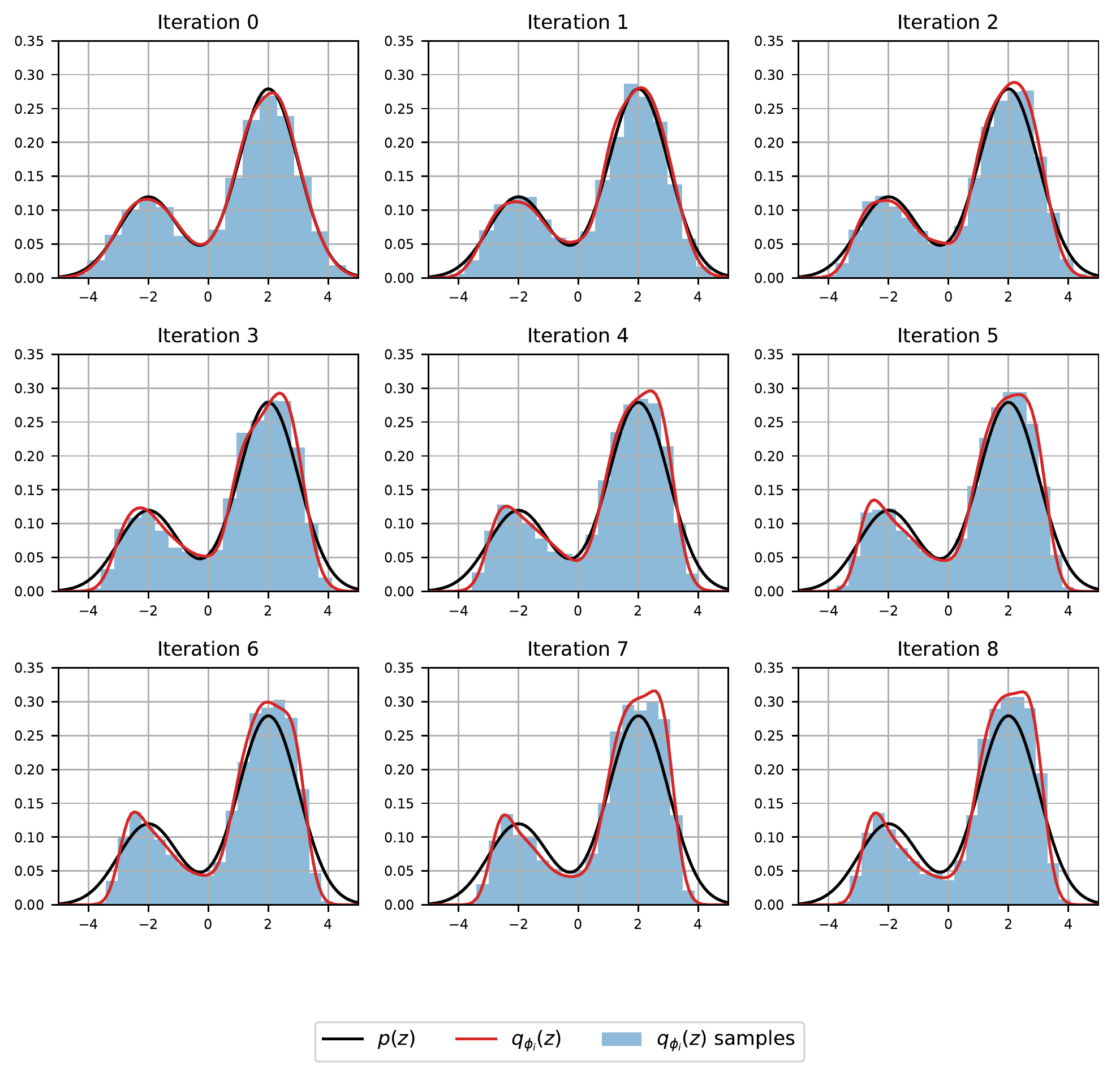}}
\caption{Learned distributions after each iteration for Gaussian mixture target distribution, $K = 100$ during training.}
\label{fig:mixture_100}
\end{figure}

\begin{figure}[t]
\hspace{-.15in}
\centering{\includegraphics[width=\textwidth]{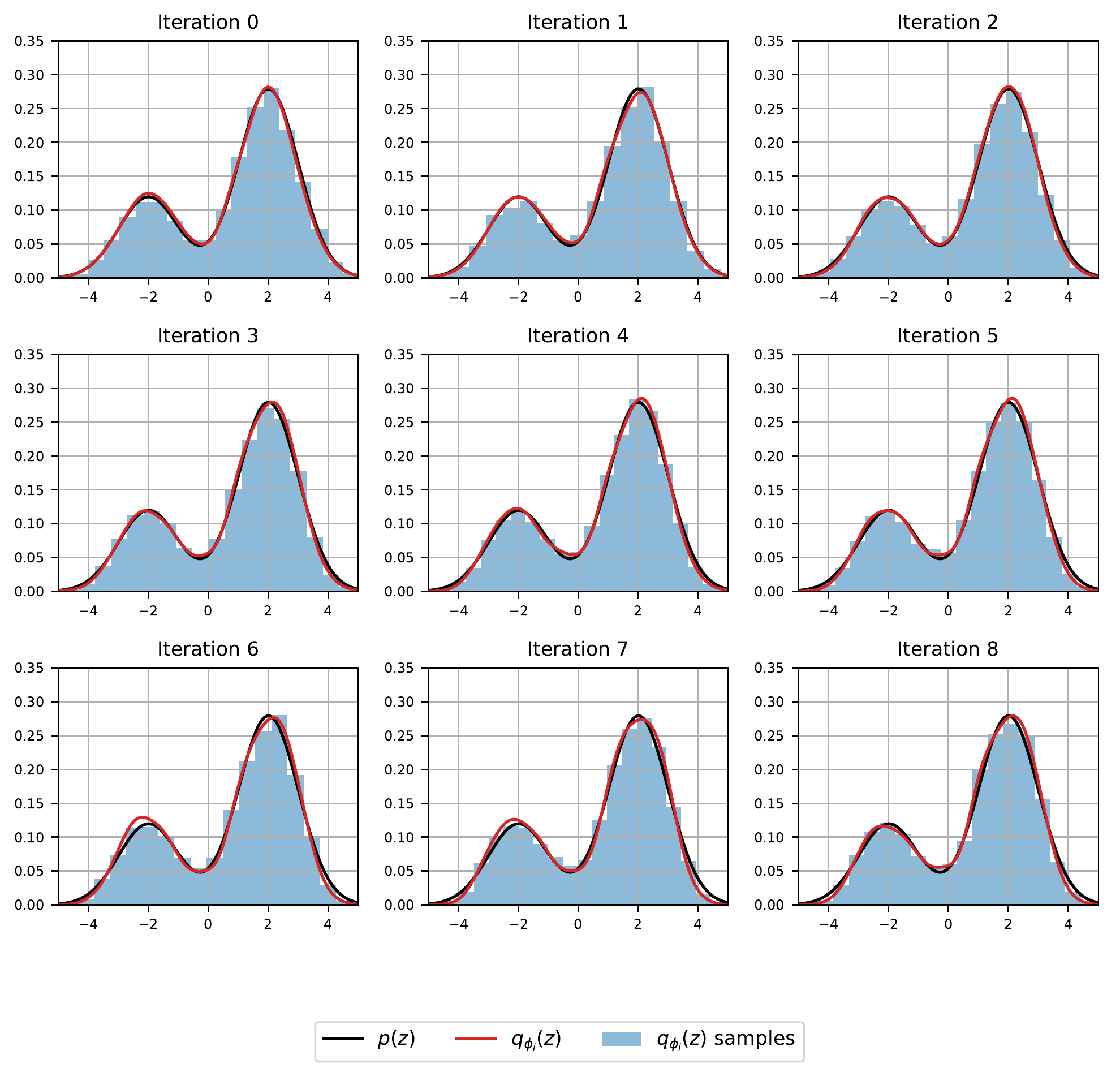}}
\caption{Learned distributions after each iteration for Gaussian mixture target distribution, $K = 1000$ during training.}
\label{fig:mixture_1000}
\end{figure}

\begin{figure}[t]
\hspace{-.15in}
\centering{\includegraphics[width=\textwidth]{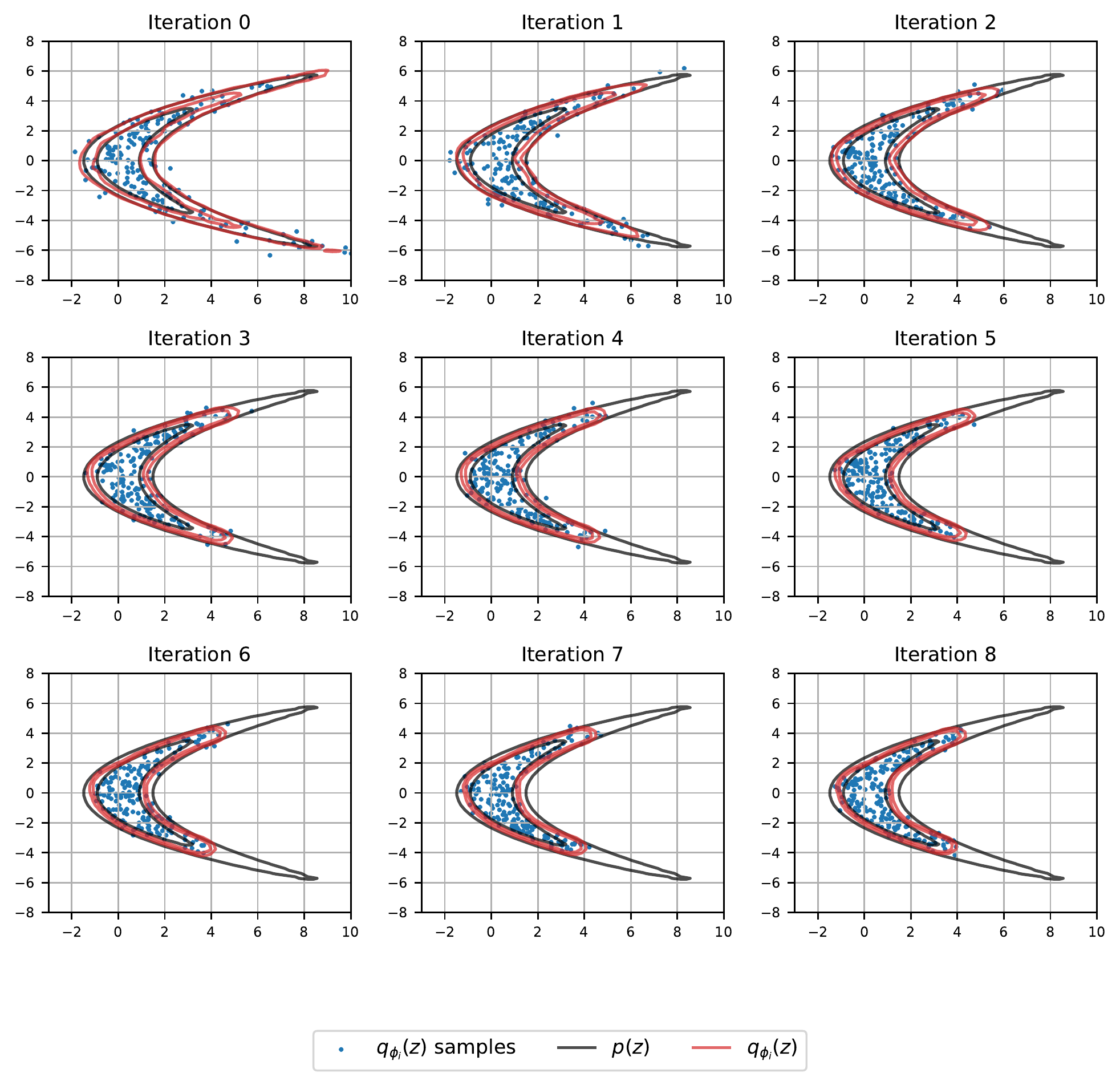}}
\caption{Learned distributions after each iteration for ``banana'' target distribution, $K = 100$ during training.}
\label{fig:banana_100}
\end{figure}

\begin{figure}[t]
\hspace{-.15in}
\centering{\includegraphics[width=\textwidth]{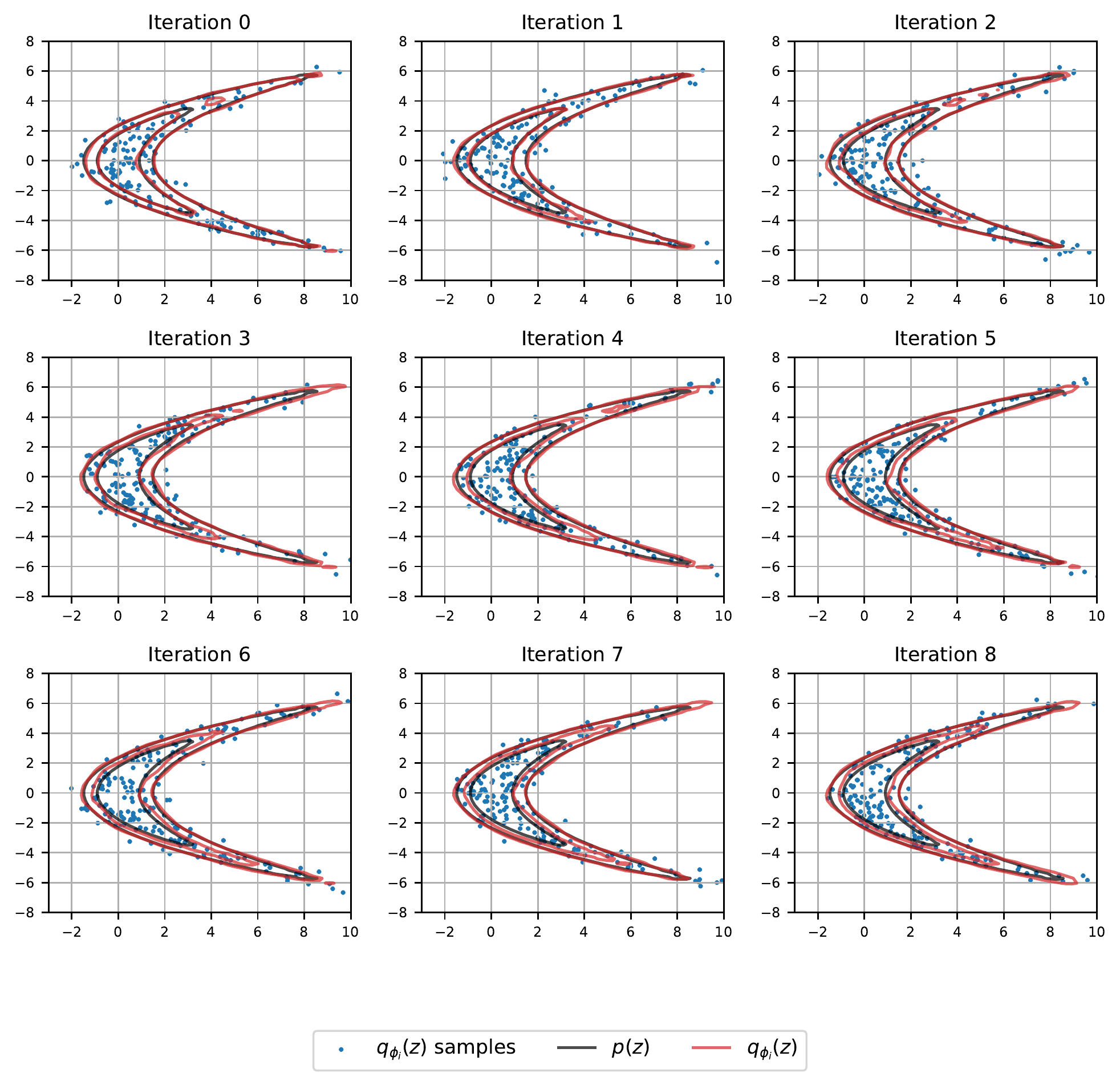}}
\caption{Learned distributions after each iteration for ``banana'' target distribution, $K = 1000$ during training.}
\label{fig:banana_1000}
\end{figure}

\end{document}